\documentclass[12pt]{article}
\usepackage{graphicx} 
\usepackage{multirow}
\usepackage{subcaption}
\usepackage{listings}
\usepackage{amsmath}
\usepackage{amsfonts}
\usepackage{amscd}
\usepackage{amssymb}
\usepackage{natbib}
\usepackage{url}
\usepackage{bbm}
\usepackage{bm}
\usepackage{array}
\usepackage{geometry}
\usepackage{setspace}
\usepackage{hyperref}
\usepackage{amsmath}
\usepackage{xcolor}
\usepackage{mathtools}
\usepackage{etoc}
\usepackage{booktabs}
\usepackage{tikz}
\usetikzlibrary{positioning}

\hypersetup{
    colorlinks=true,
    linkcolor={rgb,255:red,0;green,0;blue,139},
    citecolor={rgb,255:red,0;green,0;blue,139},
    urlcolor={rgb,255:red,0;green,0;blue,139}
}

\def\m{\mathcal}
\def\mb{\mathbb}

\def\ind{\mathbbm{1}}

\newcommand{\mnorm}[1]{{\vert\kern-0.25ex\vert\kern-0.25ex\vert #1 
    \vert\kern-0.25ex\vert\kern-0.25ex\vert}}
\newcommand{\bmnorm}[1]{{\big\vert\kern-0.25ex\big\vert\kern-0.25ex\big\vert #1 
    \big\vert\kern-0.25ex\big\vert\kern-0.25ex\big\vert}}
\newcommand{\Bmnorm}[1]{{\Big\vert\kern-0.25ex\Big\vert\kern-0.25ex\Big\vert #1 
    \Big\vert\kern-0.25ex\Big\vert\kern-0.25ex\Big\vert}}
\newcommand{\bbmnorm}[1]{{\bigg\vert\kern-0.25ex\bigg\vert\kern-0.25ex\bigg\vert #1 
    \bigg\vert\kern-0.25ex\bigg\vert\kern-0.25ex\bigg\vert}}
\newcommand{\BBmnorm}[1]{{\Bigg\vert\kern-0.25ex\Bigg\vert\kern-0.25ex\Bigg\vert #1 
    \Bigg\vert\kern-0.25ex\Bigg\vert\kern-0.25ex\Bigg\vert}}

\newtheorem{theorem}{Theorem}

\newtheorem{lemma}[theorem]{Lemma}

\newtheorem{remark}{Remark}[theorem]

\newtheorem{definition}[theorem]{Definition}
\newtheorem{assumption}[theorem]{Assumption}

\newenvironment{proof}{{\bf Proof:}}{\hfill\rule{2mm}{2mm}}

\usepackage{setspace}
\usepackage{hyperref}

\usepackage{amsmath}
\usepackage{amssymb}
\usepackage{mathtools}
\usepackage{bbm}
\usepackage{bm}
\usepackage{tikz}
\usetikzlibrary{positioning}

\makeatletter

\let\c@remark\undefined

\newcounter{remark}
\renewcommand{\theremark}{\arabic{remark}}

\newenvironment{remark}
  {\par\noindent\textbf{Remark~\refstepcounter{remark}\theremark}\quad\itshape}
  {\par\addvspace{\baselineskip}}
\makeatother

\def\m{\mathcal}
\def\mb{\mathbb}

\def\ind{\mathbbm{1}}

\begin{document}
\makeatletter
\let\oldaddcontentsline\addcontentsline
\def\addcontentsline#1#2#3{} 
\makeatother

\def\spacingset#1{\renewcommand{\baselinestretch}%
{#1}\small\normalsize} \spacingset{1}


\title{PAC-Bayes Bounds for Gibbs Posteriors via Singular Learning Theory}
\date{}
\author{Chenyang Wang\thanks{cw80@illinois.edu}\, \textsuperscript{1}, Yun Yang\thanks{yy84@umd.edu}\, \textsuperscript{2}\\
  \textsuperscript{1}Department of Statistics, University of Illinois Urbana-Champaign\\
  \textsuperscript{2}Department of Mathematics, University of Maryland, College Park}
\maketitle

\setstretch{1}

\begin{abstract}%
We derive explicit non-asymptotic PAC-Bayes generalization bounds for Gibbs posteriors, that is, data-dependent distributions over model parameters obtained by exponentially tilting a prior with the empirical risk. Unlike classical worst-case complexity bounds based on uniform laws of large numbers, which require explicit control of the model space in terms of metric entropy (integrals), our analysis yields posterior-averaged risk bounds that can be applied to overparameterized models and adapt to the data structure and the intrinsic model complexity. The bound involves a marginal-type integral over the parameter space, which we analyze using tools from singular learning theory to obtain explicit and practically meaningful characterizations of the posterior risk. Applications to low-rank matrix completion and ReLU neural network regression and classification show that the resulting bounds are analytically tractable and substantially tighter than classical complexity-based bounds. Our results highlight the potential of PAC-Bayes analysis for precise finite-sample generalization guarantees in modern overparameterized and singular models.
\end{abstract}

\noindent
{\it Keywords:}
  Empirical risk minimization, Generalization bound, Gibbs posterior, Overparameterized model, PAC-Bayes, Singular model%

\section{Introduction}

Understanding and controlling the generalization error of learning algorithms remains a central goal in statistical learning theory \citep{neyshabur2017exploring, zhang2021understanding}. Classical bounds based on uniform laws of large numbers, such as those involving VC dimension, Rademacher complexity, or metric entropy control of the model class \citep{golowich2018size, yin2019rademacher, bartlett2019nearly}, provide \emph{worst-case} guarantees on the excess risk of empirical minimizers. While these tools have offered valuable insights, they often result in loose or vacuous bounds in modern overparameterized settings, especially when the model space is high-dimensional or weakly regularized. Indeed, recent theoretical work has shown that worst-case, uniform-convergence-based analyses may be fundamentally insufficient to explain the generalization behavior observed in deep learning \citep{nagarajan2019uniform, lotfi2022pac}, where models with far more parameters than data points can still achieve remarkable performance.

An alternative and increasingly prominent line of work uses Bayesian and PAC-Bayes frameworks \citep{smith2018bayesian,wilson2020bayesian,rivasplata2020pac} for \emph{averaged-case} analysis, which replace pointwise guarantees with risk bounds averaged over posterior distributions. These bounds have several practical advantages: they are often more data-dependent, avoid explicit covering number arguments \citep{alquier2021user}, and can provide tighter control of generalization error \citep{zhou2019non,lotfi2022pac,lotfi2024non}. In particular, Bayes-type bounds adapt to model complexity by balancing empirical fit and prior regularity through posterior averaging, without requiring control of the complexity of the entire model class.
In this work, we focus on the \emph{Gibbs posterior} \citep{zhang2006information, hong2020model, bhattacharya2022gibbs}, which is a loss-driven posterior distribution that generalizes the traditional Bayesian update by replacing the negative log-likelihood with an empirical risk function induced by a loss. This formulation allows for greater flexibility in modeling and improved robustness to model misspecification, while still enabling principled uncertainty quantification. Recent advances have established PAC-Bayes bounds for Gibbs posteriors under Bernstein-type conditions \citep{alquier2021user}, and corresponding posterior contraction rates have been derived by \citet{syring2023gibbs}, enabling finite-sample generalization guarantees even in the absence of an explicit likelihood structure.

Building on this line of work, we derive an explicit finite-sample PAC-Bayes generalization bound for Gibbs posteriors. In contrast to standard PAC-Bayes inequalities, which separately balance the empirical risk average and a complexity term expressed through the Kullback--Leibler (KL) divergence, our analysis combines these two components into a single bound involving a marginal-like integral over the parameter space. This quantity plays a role analogous to a partition function (or statistical evidence) in Bayesian inference. Through this integration effect, Bayesian methods exhibit intrinsic adaptivity to model complexity, providing a Bayesian counterpart to generalization control beyond worst-case complexity measures derived from uniform laws of large numbers. In regular models, this integral admits a classical Laplace approximation~\citep{geisser1990validity}, recovering the Bayesian Information Criterion \citep[BIC,][]{schwarz1978estimating}, which is known to balance goodness of fit and model complexity. Consequently, our formulation reveals that selecting a model that optimally trades off data fit and complexity is equivalent to selecting the model that achieves the tightest PAC-Bayes generalization bound.

To analyze this integral, we use tools from singular learning theory developed by \citet{watanabe2009algebraic}, which builds on \emph{algebraic geometry} to study the asymptotics of marginal likelihoods in non-regular models. In regular models, the leading term in the expansion of the marginal likelihood is $d/2$, where $d$ is the number of parameters. In singular models, the leading term is the \emph{real log canonical threshold} (RLCT), denoted by $\lambda$. In highly non-identifiable or overparameterized models, $\lambda$ can be much smaller than $d/2$ and may take fractional values rather than half-integers. This reflects the geometry of the parameterization: the set of minimizers of the population risk often forms a manifold with many flat directions along which the risk is constant. The RLCT effectively counts only the \emph{non-flat} directions, since parameters along flat directions do not contribute to the asymptotic expansion of the associated singular integral, resulting in sharper generalization bounds.

However, most existing results in singular learning theory~\citep[e.g.][]{watanabe2009algebraic} are asymptotic in nature and focus on fixed-dimension, increasing-sample-size regimes, which are not directly applicable to analyzing overparameterized models. They also typically rely on asymptotic expansions under abstract regularity conditions that are often difficult to verify in practice.
In contrast, we develop a new \emph{non-asymptotic} framework in which a sub-exponential loss assumption enables a two-sided PAC-Bayes control between the empirical and population risks. This bypasses the need for explicit control of metric entropy (integrals) or any worst-case uniform convergence arguments, and leads directly to fast-rate PAC-Bayes bounds that depend solely on the RLCT~$\lambda$ determined by the model and the loss function. This approach avoids many technical conditions required in asymptotic analyses and yields sharper finite-sample bounds on the expected posterior risk.

We illustrate our theory through two canonical examples: low-rank matrix completion \citep{alquier2013bayesian,alquier2014bayesian,yuchi2023bayesian} and deep ReLU neural networks \citep{nagayasu2023bayesian} under squared error loss, as well as classification with logistic loss \citep{chatterji2021does,zhang2024classification}. In particular, our analysis of the RLCT for matrix completion using blow-up techniques from algebraic geometry also provides insight into the geometry of the model, revealing how the model configurations can be divided into different regimes corresponding to distinct singularity patterns, each associated with a different RLCT.
In all cases, the resulting singular integrals admit either closed-form analyses or well-controlled analytical upper bounds, yielding explicit and interpretable generalization bounds that remain tight even in highly non-regular settings. This illustrates the potential of our framework for analyzing complex and overparameterized models, where traditional techniques may be less effective.

\section{Preliminaries and Problem Formulation}
In this section, we set up the notation, review empirical risk minimization (ERM), Gibbs posteriors, PAC-Bayes bounds, and singular learning theory, and reveal the connections among them that motivate our main theoretical results.

\subsection{Empirical risk minimization}\label{sec:ERM}
Let $Z=(X,Y)\sim\mb{P}$ be a generic data point in $\mathcal{X}\times\mathcal{Y}$, and let $\Theta$ be a compact parameter space. We consider learning problems specified by a loss function $\ell_\theta(Z):\mathcal{X}\times\mathcal{Y}\to\mb{R}$, which quantifies the discrepancy between the prediction under parameter $\theta$ and the observed response. Examples include the 0--1 loss $\ell_\theta(z)=\ind\{y\neq f_\theta(x)\}$ for classification, the squared error loss $\ell_\theta(z)=(y-f_\theta(x))^2$ for regression, the check loss $\ell_\theta(z)=\rho_\tau(y-f_\theta(x))$ for quantile regression, and the log-loss $\ell_\theta(z)=-\log p(y\mid x;\theta)$ used in likelihood-based or generative modeling.
Define the population risk $R(\theta)=\mb{E}[\ell_\theta(Z)]$ and the set of population minimizers $\Theta_0 := \arg\min_{\theta\in\Theta} R(\theta)$.
In non-identifiable or overparameterized models, $\Theta_0$ may be non-singleton. Throughout, we allow this possibility and require all assumptions to hold uniformly over $\theta^\star\in\Theta_0$; unless otherwise stated, $\theta^\star$ refers to an arbitrary element of this set. This non-uniqueness is inessential, as the generalization bounds developed in this paper are governed by the local algebraic structure of neighborhoods of $\Theta_0$, rather than by properties of any particular minimizer, all of which lead to the same hypothesis $f_{\theta^\star}(x)$. Although this work assumes that the hypothesis $f_\theta(x)$ is indexed by a finite-dimensional parameter $\theta$, the results extend to settings where $\theta$ itself is the hypothesis and may be infinite-dimensional, as in nonparametric regression with function-valued parameters, such as in Section~\ref{section 3} where we may identify $\theta$ with the regression function $f$.

Given $n$ i.i.d.\ observations $Z^n=(Z_1,\dots,Z_n)$, a natural point estimator of $\theta$ is
\[
\widehat{\theta}_n \in \arg\min_{\theta \in \Theta} \Big\{\,R_n(\theta):\,=\frac{1}{n}\sum_{i=1}^n \ell_\theta(Z_i)\,\Big\},
\]
where $R_n(\cdot)$ is called the empirical risk function. This estimation
procedure is known as \emph{empirical risk minimization} (ERM) \textcolor{black}{\citep{vapnik1999overview, zhang2017mixup, donini2018empirical}}.
Classical theory \textcolor{black}{\citep{shalev2014understanding}} for ERM shows that, under suitable
concentration and local strong convexity conditions, the excess risk of the empirical
minimizer can be controlled by a complexity measure of the loss class. A representative bound takes the form: with high probability,
\[
R(\widehat\theta_n)-R(\theta^\star)
\;\lesssim\;
\mathfrak{C}_n(\{\ell_\theta:\theta\in\Theta\}),
\]
where $\mathfrak{C}_n(\mathcal F)$ denotes a statistical complexity term
associated with a generic function class $\mathcal F$. Common choices include
the (localized) Rademacher complexity or Gaussian complexity \textcolor{black}{\citep{yin2019rademacher, 
truong2022rademacher}}, which quantify how well a function class can fit random labels or noise.

A standard way to upper bound such complexities is through metric entropy
integrals \textcolor{black}{\citep{golowich2018size}}, for example
$\Psi(r)=\int_0^r \sqrt{\log \mathcal{N}(\varepsilon,\Theta,\|\cdot\|)}\, d\varepsilon$,
under a suitable norm $\|\cdot\|$. Here,
$\mathcal{N}(\varepsilon,\Theta,\|\cdot\|)$ denotes the $\varepsilon$-covering
number, defined as the smallest number of radius-$\varepsilon$ balls needed
to cover the parameter space $\Theta$, and
$\log \mathcal{N}(\varepsilon,\Theta,\|\cdot\|)$ is called the covering
entropy. In regular finite-dimensional parametric models, this entropy typically
scales as $\log \mathcal{N}(\varepsilon,\Theta,\|\cdot\|)\asymp d\log(1/\varepsilon)$,
where $d$ is the ambient parameter dimension. However, in overparameterized
or singular models, $d$ can be a misleading proxy for statistical complexity,
since multiple parameter values may correspond to the same hypothesis,
making the intrinsic complexity substantially lower than that implied by
parameter counting.
Moreover, such entropy bounds are inherently worst-case: they rely on uniform
control of the entire parameter space and depend only on its global geometry,
rather than on the structure of the low-risk region actually explored by the
estimator. As a result, entropy-based bounds may become loose or even
vacuous, failing to reflect the effective, data-adaptive geometry of the
model that governs generalization in overparameterized and singular settings.
Localized complexity analyses \textcolor{black}{\citep{koltchinskii2006local, liang2015learning,
wainwright2019high}} can partially mitigate this issue by replacing global entropy with complexities evaluated near the empirical minimizer, thereby improving the sample-size dependence. Nevertheless, these bounds typically still require $d \ll n$; otherwise the bound does not vanish as
$n$ increases.

\subsection{Loss-based Gibbs posterior}
The uniform generalization guarantees of ERM provide
worst-case control over the entire parameter space and are typically
formulated for point estimators. In overparameterized and singular models,
however, generalization is governed by the geometry of the low-risk region
rather than the global size of the parameter space.
Bayesian methods offer a different perspective. Instead of producing a
single estimator, they define a data-dependent probability measure
$\Pi_n$ over the parameter space, leading to the averaged hypothesis
$\int_{\Theta} f_\theta(x)\, d\Pi_n(\theta)$ obtained by averaging over
parameters. This average-case viewpoint naturally adapts to the intrinsic
structure of the low-risk region and is therefore particularly advantageous
in complex models.

The standard Bayesian framework, however, relies on a correctly specified
likelihood model $p(y\,|\,x,\theta)$ to define the posterior. Under model
misspecification, classical Bayesian inference may produce inconsistent
estimates and underestimate uncertainty. The Gibbs posterior addresses this
issue by replacing the negative log-likelihood with a general loss function
\citep{zhang2006information,grunwald2012safe,bissiri2016general}, while
retaining the same exponential weighting structure. In this way, inference
no longer depends on specifying a full probabilistic model; parameters are
updated directly according to how well they minimize the observed loss,
which remains well-defined even when the likelihood is misspecified.
The Gibbs posterior, defined below, can be viewed as a Bayesian counterpart of frequentist ERM: posterior mass concentrates on parameters with small empirical loss, while the prior plays the role of a regularizer.
\begin{definition}[Gibbs posterior]
Given a prior distribution $\varphi(d\theta)$ on $\Theta$ and a learning rate
$\omega>0$, the Gibbs posterior is defined as
\begin{align}
\Pi_n(d\theta)
\propto
\exp\big(-\omega n R_n(\theta)\big)\,\varphi(d\theta),\quad \theta\in\Theta.
\label{eq:gibbs posterior}
\end{align}
\end{definition}
When $\ell_\theta(Z)=-\log p_\theta(Z)$, the empirical risk function reduces to the
negative log-likelihood function,
$R_n(\theta)=-\sum_{i=1}^n\log p_\theta(Z_i)$,
and the Gibbs posterior recovers the classical Bayesian fractional posterior
\citep{bhattacharya2019bayesian,alquier2020concentration,yang2020alpha}.

An equivalent characterization of the Gibbs posterior is given by the
variational formulation (or information risk minimization) 
\begin{align}
\Pi_n
=
\arg\min_{\rho \ll \varphi}
\left\{
\int_\Theta \,R_n(\theta)\,\rho(d\theta)
+
\frac{1}{\omega n}\,\mathrm{KL}(\rho\|\varphi)
\right\},
\label{eq:interp Pi n}
\end{align}
where the Kullback--Leibler divergence term $\mathrm{KL}(\rho\|\varphi)$ acts as a complexity penalty relative to the prior. This formulation shows that Gibbs inference can be viewed as a distributional version of penalized estimation over the space of all
probability measures on $\Theta$.

\subsection{PAC-Bayes generalization}

The PAC-Bayes framework provides non-asymptotic generalization guarantees
for randomized or averaged hypothesis represented by
data-dependent probability measures over the parameter space. In contrast
to classical uniform generalization bounds for point estimators, PAC-Bayes
theory controls the posterior-averaged risk through a trade-off between
empirical risk and a complexity penalty measured by the KL divergence
between a data-dependent measure and a prior. Therefore, it aligns naturally with the Gibbs posterior, which arises from the same empirical risk–KL trade-off as in~\eqref{eq:interp Pi n} and provides the appropriate theoretical framework for our analysis.

To formalize this perspective, we now introduce additional notation used to
introduce PAC-Bayes bounds. Throughout, we measure
performance relative to some $\theta^\star\in\Theta_0$ through the excess
loss $\ell(\theta,\theta^\star;Z):=\ell_\theta(Z)-\ell_{\theta^\star}(Z)$,
with corresponding empirical and population excess risks
\[
R_n(\theta,\theta^\star)
:=
\frac{1}{n}\sum_{i=1}^n \ell(\theta,\theta^\star;Z_i) \quad \mbox{and}
\quad
R(\theta,\theta^\star)
=
\mb{E}[\ell(\theta,\theta^\star;Z)]
\ge 0.
\]
Recall that $\varphi(\theta)$ denotes a prior over $\theta\in\Theta$, and let
$\rho\ll\varphi$ be any data-dependent probability measure on $\Theta$. 
For losses satisfying $\ell_\theta(z)\in[0,1]$, an early PAC-Bayes inequality
due to \citet{mcallester1999pac} states that, with probability at least
$1-\delta$ over i.i.d.\ samples $Z^n$, the following holds simultaneously
for all $\rho\ll\varphi$:
\begin{align*}
\mb{E}_{\theta \sim \rho} [R(\theta.\theta^\ast)]
\;\le\;
\mb{E}_{\theta \sim \rho} [R_n(\theta,\theta^\ast)]
+
\sqrt{\frac{\mathrm{KL}(\rho \,\|\, \varphi) + \log(1/\delta) + \frac{5}{2}\log n + 8}{2n-1}}.
\end{align*}
This bound shows that, for bounded losses, the generalization gap decays
at rate $n^{-1/2}$.

Faster $n^{-1}$ rates can arise when a Bernstein-type condition holds near the optimal set $\Theta_0$, where the variance of the loss is bounded by its mean, that is,
$\mathrm{Var}\bigl(\ell(\theta,\theta^\star;Z)\bigr)\le C\,R(\theta,\theta^\star)$.
For example, \citet[Theorem~4.3]{alquier2021user} show that for bounded losses, under a Bernstein-type condition and a sufficiently small learning rate $\omega$, the following expectation-level PAC-Bayes inequality holds:
\begin{align}
\mb{E}_{Z^n} \bigg[\int_\Theta R(\theta, \theta^\star)\, \Pi_n(d\theta)\bigg]
\le
2\inf_{\rho\ll\varphi}
\left\{
\mb{E}_{Z^n} \bigg[\int_\Theta R_n(\theta,\theta^\star) \,\rho(d\theta)\bigg]
\;+\;
\frac{C\,\mathrm{KL}(\rho\|\varphi)}{n}
\right\},
\label{eq:classic pac bayes}
\end{align}
This result can extend to unbounded losses with sub-exponential tails and can also be strengthened to high-probability bounds; see Section~\ref{section 3}. Moreover, the right-hand side admits a marginal-type integral interpretation,
$-\log\int_\Theta \exp\!\big\{-\tfrac{1}{C}n\,R_n(\theta,\theta^\star)\big\}\,\varphi(d\theta)$,
which is well suited to analysis via singular learning theory. Finally, with a slightly larger leading constant, we can replace $R_n(\theta,\theta^\star)$ in the exponent by $R(\theta,\theta^\star)$ without invoking additional uniform law of large numbers arguments; details can be found in Remark~\ref{rem:R_n} and the sketched proof of Theorem~\ref{thm:main} in Appendix~\ref{append:proof thm5}.


\subsection{Singular learning theory and real log canonical threshold}\label{sec:SLT}

To analyze the marginal likelihood (also known as Bayesian evidence) as a
special case of marginal-type integrals arising in singular settings, where
the Fisher information matrix is non-invertible and the standard Laplace
approximation based on a local quadratic expansion is no longer applicable,
\citet{watanabe2009algebraic} developed a general theory for integrals of
the form
\begin{align}
Z(n)
:=
\int_\Theta
\exp\{-n\beta\,K_n(\theta,\theta^\star)\}\,
\varphi(d\theta),
\label{eq: Zn}
\end{align}
where $\beta>0$ is a fixed inverse temperature parameter (which can be identified with
our learning rate $\omega$), and $K_n(\theta,\theta^\star) := n^{-1} \sum_{i=1}^n \log\frac{p^\star(z_i)}{p(z_i\,|\,\theta)}$ denotes the empirical log-likelihood ratio. Its population counterpart $K(\theta,\theta^\star) := \mb{E}\bigl[K_n(\theta,\theta^\star)\bigr]= \mathrm{KL}\big(p^\star\|p(\cdot\|\,\theta)\big)$ coincides with the KL divergence between the true data generating distribution $p^\star$ and the distribution $p(\cdot\,|\,\theta)$ from the parametric model family $\{p(\cdot\,|\,\theta):\theta\in\Theta\}$. Singular learning theory \citep{watanabe2009algebraic} characterizes the
asymptotic behavior of \eqref{eq: Zn} through the \emph{real log canonical
threshold} (RLCT), denoted by $\lambda$, which captures the local geometric
complexity near $\theta^\star$. Although originally developed for
log-likelihood-based losses, this analytic framework applies more broadly to
general loss functions and therefore extends beyond traditional Bayesian
models. 

To make this characterization precise, we follow and adapt the framework
of \citet{watanabe2009algebraic} to the general loss setting by defining
the RLCT through the analytic structure of the following zeta function associated with our population excess risk $R(\theta,\theta^\star)$,
\[ 
\zeta(s) := \int_{\Theta} R(\theta,\theta^\star)^s \,\varphi(d\theta), \qquad s \in \mb{C}, 
\] 
which is well defined and holomorphic for $\mathrm{Re}(s) > 0$ (the real part of complex number $s$). It is known \textcolor{black}{\citep{igusa1975complex, veys1997zeta}} that $\zeta(s)$ admits a meromorphic continuation to the entire complex plane whose poles are real, negative, and rational. Let \begin{align} 
    \lambda := -\max\{\text{poles of } \zeta(s)\}\label{eq:def rlct}
\end{align} denote the absolute value of the largest (rightmost) pole. This definition is natural because the pole closest to the origin governs the leading term in the asymptotic expansion of the associated Laplace-type integrals. Under suitable analytic and identifiability conditions, \citet[Main Theorem~6.2]{watanabe2009algebraic} shows that the log marginal likelihood admits the asymptotic expansion 
\begin{equation}
    -\log Z(n) = \lambda \log n - (m-1)\log\log n + \m O_P(1), \quad \mbox{as } n\to\infty,
\label{eqn:SBIC}
\end{equation}
where the RLCT $\lambda$ determines the pre-constant in front of the leading $\log n$ term, while $m\geq 1$, denoting the multiplicity of the pole, only contributes to the secondary double-logarithmic correction term. 

The standard Bayesian information criterion (BIC) approximation to the
Bayesian evidence \citep{schwarz1978estimating} corresponds to the case
$(\lambda, m)=(d/2,1)$ in regular parametric models, where the Hessian
matrix of $R(\theta,\theta^\star)$ (or, in the likelihood setting,
$K(\theta,\theta^\star)$) is nonsingular at $\theta^\star$. This arises as a
special case of the general theory and can be derived using the classical
Laplace approximation. More generally, $\lambda$ depends only on the local analytic structure of $K(\theta,\theta^\star)$ in a neighborhood of its minimizer set, is
invariant under analytic reparameterizations of $\Theta$, and can take
fractional values beyond half-integers (c.f.~Theorem~\ref{thm:completion}). In particular, $\lambda$ provides an intrinsic measure of complexity for
general loss-based models, extending beyond the classical log-likelihood
framework and regular parametric settings. Computing the
RLCT typically relies on tools from algebraic geometry, most notably
resolution of singularities \textcolor{black}{\citep{aoyagi2005stochastic, aoyagi2005resolution, aoyagi2010bayesian}}, and can be technically involved for complex models and/or parameter spaces. Roughly speaking, one constructs a resolution map that
transforms $R(\theta,\theta^\star)$ into a normal crossing form, from which
the RLCT can be read off from the associated exponents.
Appendix~\ref{append: blow up} provides further discussion and examples.

In addition, the singular learning theory developed by
\citet{watanabe2009algebraic} is inherently asymptotic and often requires
verifying technical conditions after analytic transformations of the
parameter space. Moreover, connecting the empirical log-likelihood ratio
$K_n$ to its population counterpart $K$ relies on variants of uniform laws
of large numbers, which implicitly require metric entropy control and lead to the
probabilistic $\mathcal{O}_P(1)$ remainder term in the expansion
\eqref{eqn:SBIC}. This $\mathcal{O}_P(1)$ term is inherently asymptotic in
nature and does not provide explicit finite-sample (or high-probability) control. Moreover, these arguments are not directly applicable to highly
overparameterized models where $d$ can be substantially larger than $n$.
In contrast, our approach works directly with posterior averages of the
empirical excess risk $R_n$ and relates them to posterior averages of the
population excess risk $R$ through PAC-Bayes techniques that do not require
explicit control of metric entropy, before invoking any resolution of 
singularities arguments. This allows us to develop and extend a new
non-asymptotic framework for singular learning theory. We refer to Remark~\ref{rem:R_n}
and Appendix~\ref{append:proof thm5} for related technical details.

\section{Our Main Results} \label{section 3}
In this section, we present our main PAC-Bayes generalization result for
Gibbs posteriors using tools from singular learning theory. We begin by imposing the following moment generating function condition on the (random) loss, which can be verified in many standard models and plays a crucial role in our analysis.

\begin{assumption}(Bernstein-type moment condition) \label{assump:overall} There exists two global constants $L > 0$ and $b > 0$ such that, for all $|\omega| \le \frac{1}{2b}$, the moment generating function (MGF) satisfies
    \begin{align}
        \mb{E}_Z[\exp\{\omega(\ell(\theta,\theta^\star;Z) - R(\theta, \theta^\star))\}]
         \le \exp\left\{ \frac{\omega^2L \cdot R(\theta, \theta^\star)}{2} \right\}.\label{eq:bernstein}
    \end{align}
\end{assumption}
This condition is a Bernstein-type moment control: for each $\theta$, the
centered loss $\ell(\theta,\theta^\star;Z)-R(\theta,\theta^\star)$ is
sub-exponential with variance parameter $L$ proportional to
$R(\theta,\theta^\star)$ and scale parameter $b$. This is a mild condition. As an illustration, we verify this for the squared loss in standard regression with sub-Gaussian noise, and for the logistic loss in classification under a margin condition.

\paragraph{Regression using squared loss.} Specifically, consider the regression model
$Y=f^\star(X)+\varepsilon$ with $\mb{E}[\varepsilon]=0$, and predictors
$f$ satisfying $|f(x)|\le B_0$ for all $x$. In this example, we identify the hypothesis $f_\theta$ with $f$. Then
$|\Delta_f(x)|:=|f(x)-f^\star(x)|\le 2B_0$.
Under the squared loss, for $Z=(X,Y)$ the excess loss can be written as
\[
\ell_{\rm sqr}(f,f^\star;Z)
=
(Y-f(X))^2-(Y-f^\star(X))^2
=
\Delta_f(X)^2 - 2\varepsilon\,\Delta_f(X),
\]
and the corresponding population excess risk is
$R(f,f^\star)=\mb{E}_X[\Delta_f(X)^2]$.
We now verify that Assumption~\ref{assump:overall} holds in this setting.

\begin{lemma}[Bernstein condition for squared loss]\label{lemma:verify assump}
Suppose $\varepsilon$ is $\sigma^2$-sub-Gaussian, mearning that
$\mb{E}[\exp\{t\varepsilon\}] \le
\exp\{\sigma^2 t^2/2\}$ for all $t\in\mb{R}$.
Then Assumption~\ref{assump:overall} holds with parameters
\begin{align}
\frac{1}{2b}
=
\min\left\{\frac{3}{16B_0^2},\,\frac{1}{2\sigma^2}\right\}\quad\mbox{and}\quad
L
=
32B_0^2 + 4\sigma^2.
\label{eq:b and L}
\end{align}
\end{lemma}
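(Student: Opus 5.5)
The plan is to compute the moment generating function by first conditioning on $X$ to integrate out the noise, and then to control the remaining bounded term $\Delta_f(X)^2$ with an elementary quadratic bound on the exponential.

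\textbf{Step 1: integrate out the noise.} Write $W:=\Delta_f(X)^2\in[0,4B_0^2]$, so that $R(f,f^\star)=\mb{E}[W]$. The centered excess loss is
\[
\ell_{\rm sqr}(f,f^\star;Z)-R(f,f^\star)=(W-\mb{E}W)-2\varepsilon\Delta_f(X).
\]
I will use that $\varepsilon$ is independent of $X$, or more generally that it is conditionally $\sigma^2$-sub-Gaussian given $X$; this is the reading of the hypothesis I intend. Conditioning on $X$ and applying the sub-Gaussian bound with $t=-2\omega\Delta_f(X)$ gives
\[
\mb{E}\big[e^{-2\omega\varepsilon\Delta_f(X)}\,\big|\,X\big]\le e^{2\sigma^2\omega^2 W}.
\]
Hence the MGF in \eqref{eq:bernstein} is at most $e^{-\omega R}\,\mb{E}\big[e^{aW}\big]$, where $a:=\omega+2\sigma^2\omega^2$.

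\textbf{Step 2: bound $\mb{E}[e^{aW}]$.} This is the main step, and the only place the constraint on $|\omega|$ enters. Since $|\omega|\le 1/(2\sigma^2)$, we have $2\sigma^2\omega^2\le|\omega|$, so $|a|\le 2|\omega|$ for either sign of $\omega$. Since also $|\omega|\le 3/(16B_0^2)$ and $W\le 4B_0^2$, we get
\[
|aW|\le 2\cdot\tfrac{3}{16B_0^2}\cdot 4B_0^2=\tfrac32 .
\]
I will then use the elementary inequality $e^x\le 1+x+x^2$ for $|x|\le 3/2$. This needs a short calculus check: the function $x^2-(e^x-1-x)$ is nonnegative on $(-\infty,x_0]$ with $x_0\approx1.79$. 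It yields
\[
\mb{E}[e^{aW}]\le 1+aR+a^2\mb{E}[W^2]\le 1+aR+4B_0^2a^2R\le \exp\{aR+4B_0^2a^2R\},
\]
where the middle step uses $W^2\le 4B_0^2W$ and the last step uses $1+u\le e^u$.

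\textbf{Step 3: collect the exponents.} Multiplying by $e^{-\omega R}$, the total exponent is
\[
(a-\omega)R+4B_0^2a^2R\le 2\sigma^2\omega^2R+16B_0^2\omega^2R=\frac{\omega^2R}{2}\,(4\sigma^2+32B_0^2),
\]
using $a^2\le4\omega^2$. This is exactly $\omega^2LR/2$ with $L$ as in \eqref{eq:b and L}, and the admissible range $|\omega|\le 1/(2b)$ is precisely the minimum of the two constraints used in Step 2.

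The only delicate points are the constant bookkeeping in Step 2, namely ensuring $|aW|$ stays inside the region where $e^x\le1+x+x^2$ holds, and making the independence (or conditional sub-Gaussianity) of $\varepsilon$ given $X$ explicit.
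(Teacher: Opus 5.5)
Your proof is correct and follows the same route as the paper's: condition on $X$ to integrate out the sub-Gaussian noise, then control the moment generating function of $W=\Delta_f(X)^2$ at $a=\omega+2\sigma^2\omega^2$ using $|a|\le 2|\omega|$, $W\le 4B_0^2$ and $\mathbb{E}[W^2]\le 4B_0^2R(f,f^\star)$, which gives the same constants. The only difference is that the paper cites the standard Bernstein MGF inequality for the bounded centered variable $W-\mathbb{E}W$, whereas you prove the needed bound directly via $e^x\le 1+x+x^2$ on $|x|\le 3/2$ (which also covers $\omega<0$ automatically). Your explicit assumption that $\varepsilon$ is independent of, or conditionally sub-Gaussian given, $X$ is exactly what the paper uses implicitly when it conditions on $X=x$.
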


\paragraph{Classification using logistic loss.}
We now consider the classification setting with the logistic loss, a smooth and convex surrogate of the $0$--$1$ loss that is widely used in practice \citep{lyugradient,chizat2020implicit,chatterji2021does}. Let $Y\in\{-1,+1\}$ be binary labels and define
\[
\ell_{\rm logit}(f(X),Y)=\log\bigl(1+\exp(-Y\,f(X))\bigr).
\]
Let $\eta(X)=\mathbb{P}(Y=1\mid X)$. It is easy to show \citep{zhang2004statistical} that the conditional population risk is uniquely minimized at the log odds ratio
$f^\star(X)=\log \eta(X)-\log(1-\eta(X))$.
As in the regression setting, we also identify the hypothesis $f_\theta$ with $f$, and assume the logits are uniformly bounded, $|f(X)|\le B_3$, so the excess loss is bounded. We also impose a margin condition \citep{tsybakov2004optimal}: $\tau\le \eta(X)\le 1-\tau$ almost surely for some $\tau\in(0,1/2)$. Under these assumptions, the population excess risk admits a local quadratic approximation around $f^\star$ and is equivalent to $\mathbb E\big[(f(X)-f^\star(X))^2\big]$ in a neighborhood of $\theta^\star$.

\begin{lemma}[Bernstein condition for logistic loss]\label{lem:logistic}
Under the bounded-logit and margin conditions above, there exists a
neighborhood (relative to the sup-norm) of $f^\ast$ such that, for all
$f$ in this neighborhood, the excess logistic loss $\ell_{\rm logit}(f,f^\ast;Z)=\ell_{\rm logit}(f(X),Y) - \ell_{\rm logit}(f^\ast(X),Y)$
satisfies Assumption~\ref{assump:overall} with parameters
\begin{align*}
b = \log(1+e^{B_3}) \quad\mbox{and}\quad
 L=\frac{8}{\tau(1-\tau)}.
\end{align*}
\end{lemma}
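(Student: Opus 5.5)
The plan is to reduce the MGF bound to a conditional-on-$X$ computation combined with a Bernstein/Hoeffding-type argument on a bounded random variable. Write $W:=\ell_{\rm logit}(f,f^\ast;Z)$ and $g(u,y)=\log(1+e^{-yu})$. Since $\partial_u g(u,y)=-y\,\sigma(-yu)$ has absolute value at most $1$, the mean value theorem gives
\[
|W|\le |f(X)-f^\ast(X)| \le 2B_3 .
\]
In fact $|W|\le \log(1+e^{B_3})$ is also available directly, since each logistic loss lies in $[\log(1+e^{-B_3}),\log(1+e^{B_3})]$ when the logits are bounded by $B_3$ (assuming $f^\ast$ also satisfies $|f^\ast|\le B_3$, which follows from the margin condition after enlarging $B_3$ if necessary). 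This explains the choice $b=\log(1+e^{B_3})$: $W$ and $W-\mathbb E W$ are bounded by a multiple of $b$.

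The key step is the two-sided comparison of the variance proxy with the excess risk. Set $\Delta=f-f^\ast$. The second derivative is $\partial_u^2 g=\sigma(u)(1-\sigma(u))\le 1/4$. Expanding the conditional risk $r(u;x)=\eta(x) g(u,1)+(1-\eta(x))g(u,-1)$ around $u=f^\ast(x)$, where $r'=0$, gives
\[
\mathbb E[W\mid X]=\tfrac12 r''(\xi)\Delta^2 .
\]
Here $r''(\xi)=\sigma(\xi)(1-\sigma(\xi))$. In a sup-norm neighborhood of $f^\ast$ the point $\xi$ stays near $f^\ast(X)$, so $\sigma(\xi)$ stays near $\eta(X)\in[\tau,1-\tau]$. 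Shrinking the neighborhood so that $\sigma(\xi)(1-\sigma(\xi))\ge \tau(1-\tau)/2$, we get
\[
R(f,f^\ast)\ge \tfrac{\tau(1-\tau)}{4}\,\mathbb E[\Delta^2].
\]
Conversely, $\mathbb E[W^2]\le \mathbb E[\Delta^2]$ by the Lipschitz bound. Together these give $\mathrm{Var}(W)\le \mathbb E W^2\le \frac{4}{\tau(1-\tau)}R(f,f^\ast)$. This is the Bernstein variance condition with constant $L/2$.

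Finally, I would turn the variance bound into the MGF bound using the standard Bernstein MGF inequality for a centered variable bounded by $M$. For $|\omega|M\le 1/2$ (or similar), one has
\[
\mathbb E e^{\omega(W-\mathbb EW)}\le \exp\bigl\{\omega^2\mathrm{Var}(W)\,\psi(\omega M)\bigr\},
\]
where $\psi(x)=(e^x-1-x)/x^2$ and $\psi\le 1$ on the relevant range. Taking $M=2b$, since the centered variable is bounded by $2b$, and $|\omega|\le 1/(2b)$ gives $|\omega| M\le 1$. Then $\psi(1)=e-2<1$, so $\mathbb E e^{\omega(W-\mathbb EW)}\le \exp\{\omega^2 \mathrm{Var}(W)\}$. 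Combining this with the variance bound yields
\[
\exp\bigl\{\omega^2\tfrac{4}{\tau(1-\tau)}R\bigr\}=\exp\{\omega^2 L R/2\}
\]
with $L=8/(\tau(1-\tau))$, as claimed.

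The main obstacle is the local curvature lower bound, namely choosing the neighborhood so that $\sigma(\xi)(1-\sigma(\xi))$ stays above a constant multiple of $\tau(1-\tau)$ and the constants land exactly on $8/(\tau(1-\tau))$. I would try a neighborhood radius $\epsilon$ with $e^{-\epsilon}\ge 1/2$, using $\sigma(\xi)(1-\sigma(\xi))\ge e^{-|\xi-f^\ast|}\eta(1-\eta)$. This inequality follows from the log-derivative of $\sigma(1-\sigma)$ being bounded by $1$ in absolute value.
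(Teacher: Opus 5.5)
Your proposal is correct and follows essentially the same route as the paper: a Bernstein-type MGF bound for the bounded excess loss, a second-order Taylor expansion with curvature $\sigma(1-\sigma)$ giving $R(f,f^\ast)\ge \tfrac{\tau(1-\tau)}{4}\mathbb{E}[(f-f^\ast)^2]$ on a small sup-norm neighborhood, and the $1$-Lipschitz bound $\mathbb{E}W^2\le \mathbb{E}[(f-f^\ast)^2]$, which together give $L=8/(\tau(1-\tau))$. The differences are cosmetic, and slightly more careful on your side: you use the Bennett form with $\psi(1)=e-2<1$ and the correct bound $2b$ for the centered variable, and you make the neighborhood explicit (radius $\log 2$) via $|(\log\sigma(1-\sigma))'|=|1-2\sigma|\le 1$, whereas the paper uses the $\omega^2V/(2(1-\omega b/3))$ form and an unquantified uniform-continuity argument with $\gamma=\tau(1-\tau)/4$.
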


\begin{remark}
A key advantage of Assumption~\ref{assump:overall} is that it accommodates
a broad class of losses and noise distributions, with bounded and Gaussian
noise as special cases. In particular, it generalizes the setting
of Theorem~4.3 in \citet{alquier2021user}, where only bounded losses are
considered. Moreover, this condition yields two-sided sub-exponential concentration for the centered loss, as quantified by inequalities~\eqref{eq:sketch-mgf-plus} and~\eqref{eq:sketch-mgf-minus} in Appendix~\ref{append:proof thm5}.
This allows us to relate posterior averages
of the empirical risk and the population risk in both directions, without
resorting to worst-case uniform bounds over the parameter space. By
contrast, \citet{syring2023gibbs} impose a one-sided condition of the form
$\mb{E}_Z\!\left[\exp\{-\omega\,\ell(\theta,\theta^\star;Z)\}\right]
\le \exp\{-K\omega\,R(\theta,\theta^\star)\}$, which still requires a
separate analysis of the empirical risk to obtain asymptotic
concentration rates.
\end{remark}

Under Assumption~\ref{assump:overall}, the PAC-Bayes inequalities can be
combined with the RLCT expansion to yield an explicit high-probability risk
bound for the Gibbs posterior, as shown in the following theorem.

\begin{theorem}[PAC-Bayes risk bound]\label{thm:main}
Suppose Assumption~\ref{assump:overall} holds. Let $\bar\omega := \min\left\{\frac{2}{L},\,\frac{1}{2b}\right\}$. Then, for any $\delta\in(0,1)$ and any $\omega\in(0,\bar\omega)$, with probability at least $1-\delta$,
\begin{align}
    \int_\Theta R(\theta,\theta^\star)\, \Pi_n(d\theta)
\;\le\;\tfrac{2}{\big(1-\frac{\omega L}{2}\big)\omega n} \bigg[-\log\int_\Theta\exp\Bigl\{-\tfrac{(1+\frac{\omega L}{2})\,\omega n}{2}\,R(\theta,\theta^\star)\Bigr\}\,\varphi(d\theta) + \log\frac{2}{\delta}\bigg],\label{eq:pacbayes-integral}
\end{align}
Further analysis of the integral term yields
\begin{align}
    \int_\Theta R(\theta,\theta^\star)\, \Pi_n(d\theta)
\;\le\;\tfrac{2}{\big(1-\frac{\omega L}{2}\big)\omega n} \Big[\lambda \log n - (m-1)\log \log n
+ \log\frac{2}{\delta} + C_0(\varphi, L, \omega)\Big],\label{eq:explicit}
\end{align}
where $\lambda$ denotes RLCT of the model and $C_0(\varphi, L, \omega)$ is independent of $n$.
\end{theorem}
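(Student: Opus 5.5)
The plan is to run two PAC-Bayes inequalities, one in each direction, built from the two-sided MGF bound of Assumption~\ref{assump:overall}. These are combined through the Donsker--Varadhan variational formula, so that the Gibbs posterior's own optimality \eqref{eq:interp Pi n} cancels the empirical terms.

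\emph{Step 1 (upper direction).} Apply Assumption~\ref{assump:overall} with parameter $-\omega$ and independence over the $n$ samples. This gives
\[
\mb{E}_{Z^n}\exp\{\omega n(R-R_n) - \tfrac{\omega^2 L n}{2}R\}\le 1
\]
for each $\theta$. Integrating against $\varphi$ and using Fubini, Markov at level $\delta/2$, and Donsker--Varadhan, we get with probability $\ge 1-\delta/2$, simultaneously for all $\rho\ll\varphi$,
\[
(1-\tfrac{\omega L}{2})\,\omega n\int R\,d\rho \le \omega n\int R_n\,d\rho + \mathrm{KL}(\rho\|\varphi)+\log\tfrac{2}{\delta}.
\]
Take $\rho=\Pi_n$. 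By \eqref{eq:interp Pi n}, the right-hand side equals $-\log\int e^{-\omega n R_n}d\varphi + \log\frac{2}{\delta}$.

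\emph{Step 2 (lower direction).} We must bound $-\log\int e^{-\omega nR_n}d\varphi$ from above by a population quantity without any uniform law. Work at half temperature, by splitting $\omega nR_n$ via Cauchy--Schwarz/Jensen, or more directly as follows. Using \eqref{eq:bernstein} with $+\omega$, we have
\[
\mb{E}\exp\{\omega n(R_n-R)-\tfrac{\omega^2Ln}{2}R\}\le1.
\]
Apply Donsker--Varadhan again with probability $\ge1-\delta/2$: for all $\rho$,
\[
\omega n\int R_n\,d\rho\le (1+\tfrac{\omega L}{2})\omega n\int R\,d\rho+\mathrm{KL}(\rho\|\varphi)+\log\tfrac2\delta.
\]
Then bound $-\log\int e^{-\omega nR_n}d\varphi=\inf_\rho\{\omega n\int R_n d\rho+\mathrm{KL}\}$ by
\[
\inf_\rho\{(1+\tfrac{\omega L}{2})\omega n\int R\,d\rho+2\,\mathrm{KL}(\rho\|\varphi)\}+\log\tfrac2\delta.
\]
The factor $2$ on the KL is what forces the halved exponent. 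The infimum equals $2\bigl[-\log\int\exp\{-\tfrac{(1+\omega L/2)\omega n}{2}R\}d\varphi\bigr]$, again by the Gibbs variational identity. A union bound over the two events gives total probability $1-\delta$. Combining and dividing by $(1-\tfrac{\omega L}{2})\omega n$ yields \eqref{eq:pacbayes-integral}, with the constant $2$ absorbing both $\log\frac2\delta$ terms. I would double-check the bookkeeping so that exactly $2/((1-\omega L/2)\omega n)$ appears. The condition $\omega<2/L$ keeps the left coefficient positive, and $\omega<1/(2b)$ keeps the MGF bound valid.

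\emph{Step 3 (singular integral).} Set $t=\tfrac{(1+\omega L/2)\omega}{2}n$. We need a deterministic lower bound
\[
\int e^{-tR}d\varphi\ge c\,t^{-\lambda}(\log t)^{m-1}
\]
for all large $t$, and for all $n\ge 2$ after adjusting constants. Write the integral as $\int_0^\infty e^{-tu}\,dV(u)$, where $V(u)=\varphi(R<u)$ is the state-density volume. The meromorphic continuation of $\zeta$ with largest pole $-\lambda$ of order $m$ gives $V(u)\sim c_1u^\lambda(\log(1/u))^{m-1}$ as $u\to0$. This can be obtained via resolution of singularities, under which $R$ becomes locally $u^{2k}$ times a unit, with the prior density $|u^h|\,b(u)$. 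Then restrict to $u\le 1/t$:
\[
\int e^{-tR}d\varphi\ge e^{-1}V(1/t).
\]
This gives $-\log(\cdot)\le\lambda\log t-(m-1)\log\log t+O(1)$. Converting $\log t$ to $\log n$ puts the $\omega,L$-dependence into $C_0(\varphi,L,\omega)$.

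The main obstacle is this last step. One needs a \emph{non-asymptotic} lower bound on $V(u)$ valid for every $u\in(0,1]$, not just asymptotically, and a check that the constant is $n$-free. I would handle this by taking $c_1$ from the local normal-crossing form on a single chart near $\Theta_0$, and noting that $V$ is monotone and positive, which absorbs the remaining range of $u$ into $C_0$.
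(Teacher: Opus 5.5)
Your proposal is correct and is essentially the paper's argument. You derive two PAC-Bayes inequalities from the MGF bound at $\pm\omega$ (Donsker--Varadhan plus Markov at level $\delta/2$), take a union bound, and choose the half-temperature population Gibbs measure $\rho\propto\exp\{-\tfrac{(1+\omega L/2)\omega n}{2}R\}\varphi$, whose risk-plus-KL collapses to the log-partition term. You then finish with the RLCT asymptotics of that deterministic integral.

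Routing through $-\log\int e^{-\omega nR_n}\,d\varphi$, instead of combining the two inequalities for a generic $\rho$, is only a reordering of the same computation. Your bound $\int e^{-tR}\,d\varphi\ge e^{-1}\varphi(R<1/t)$ is the same restriction trick the paper uses in its state-density lemma. Restricting instead to $\{R<1/n\}$, with factor $e^{-t/n}$, would spare you the conversion from $\log\log t$ to $\log\log n$.
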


\setcounter{remark}{1}
\begin{remark}\label{rem:R_n}
In a standard PAC-Bayes analysis, one typically stops at an inequality (e.g.~inequality~\eqref{eq:classic pac bayes}) which bounds the posterior average of the population risk by the posterior average of the empirical risk. Choosing the measure $\rho$ on the right-hand side as the Gibbs posterior $\Pi_n$ then leads to an upper bound of the form
$-\log \int_\Theta \exp\{-\omega n R_n(\theta,\theta^\star)\}\,\varphi(d\theta)$,
which still depends on the empirical risk $R_n$. This dependence is
precisely what prevents a direct application of singular learning theory.
The key advantage of our two-sided control from Assumption~\ref{assump:overall} is that we obtain a
high-probability version of the reverse-direction inequality of
\eqref{eq:classic pac bayes}.
 Combining the two inequalities yields a bound of the form \eqref{eq:pacbayes-integral}, 
whose right-hand side depends only on the deterministic population risk $R(\theta,\theta^\star)$. By selecting an
appropriate ``population level" Gibbs-type posterior $\rho(d\theta)\propto \exp\Bigl\{-\frac{(1+\omega L/2)\,\omega n}{2}\,R(\theta,\theta^\star)\Bigr\}\,\varphi(d\theta)$, we then obtain a marginal-type integral whose exponent involves the population risk rather than the empirical risk, allowing it to fit directly into the RLCT-based singular learning theory framework while
avoiding the $\mathcal{O}_P(1)$ remainder term in the expansion
\eqref{eqn:SBIC} that arises from asymptotic arguments.
\end{remark}

\begin{remark} Since $m\ge 1$, the second term in \eqref{eq:explicit} is non-positive.
Therefore, if only an upper bound $\bar\lambda \ge \lambda$ is available,
the entire expression in the brackets can be bounded by
$\bar\lambda \log n + \mathcal{O}(1)$. In practice, there are various
approaches for deriving such upper bounds.
A universal upper bound for $\lambda$ is $d/2$
\citep[Theorem~7.2]{watanabe2009algebraic}, where $d$ is the parameter
dimension, corresponding to the classical BIC penalty. This suggests that
the BIC penalty can be overly conservative in overparameterized models,
as it may substantially overestimate the intrinsic model complexity
captured by the RLCT. More refined, model-specific upper bounds have been
derived in particular cases of singular models, for example,
for ReLU networks in \citet{nagayasu2023bayesian} and for multivariate
Gaussian mixture models in \citet{yamazaki2003singularities}.

\end{remark}

\begin{remark}
As Theorem~\ref{thm:main} shows, the RLCT precisely captures the intrinsic
complexity of the parameter space, rather than the often overly pessimistic
covering numbers or entropies used in traditional bounds such as those
reviewed in Section~\ref{sec:ERM}, which are typically tied to the ambient
dimension $d$.
This comparison also highlights the advantage of working with
average-type estimators and analyses, which lie at the core of Bayesian
inference. Such an approach removes the need for global uniform control
over the entire parameter space and expresses the effective complexity
through integrals of exponentiated risk functions over the parameter
space. In particular, Bayesian integration over parameters naturally
induces an Occam's razor effect: models whose low-risk regions are geometrically small are automatically penalized through the integration, while models with large or diffuse low-risk regions receive higher posterior weight. This implicit complexity regularization arises without any explicit
tuning and reflects the intrinsic geometry of the model.
Our theoretical results make this intuition precise by deriving explicit
risk bounds in terms of such integrals, where the integration itself
automatically accounts for this penalty effect. The RLCT quantifies this
intrinsic integration-based complexity in a precise way.
\end{remark}

\section{Applications to Concrete Examples}
In this section, we demonstrate the consequences of our general theory by
applying it to two machine learning problems involving overparameterized singular models: matrix completion and ReLU neural network regression and classification.

\subsection{Matrix completion}\label{subsec:matrix completion}
We first consider the matrix completion problem, where the goal is to recover an unknown low-rank matrix $M^\star \in \mb{R}^{d_1 \times d_2}$ with rank $r\le\min\{d_1,d_2\}$ from partial and noisy observations. Assume we have $n$ noisy observations of the form $Y_k = M^\star_{I_k J_k} + \varepsilon_k$, $k=1,\dots,n$,
where each index pair $(I_k, J_k)$ is drawn independently and uniformly from $[d_1]\times[d_2]$, and $\varepsilon_k$ denotes additive noise with zero mean.
Let $M \in \mb{R}^{d_1 \times d_2}$ be a candidate matrix. Since usually the true rank $r$ is assumed to be much smaller than $d_1$ and $d_2$, following standard practice in the literature \citep{candes2010matrix, recht2013parallel, alquier2014bayesian}, we parameterize the candidate matrix through a low-rank factorization $M = UV$, where $U \in \mb{R}^{d_1 \times H}$ and $V \in \mb{R}^{H \times d_2}$ for some latent dimension $H$ such that $r\le H\ll\min\{d_1,d_2\}$.

Our goal is to assess the closeness of $M$ to $M^\star$, typically measured in the matrix Frobenius norm. To align with this objective, we use the following excess loss function:
\begin{align*}
    \ell(M,M^\star;Z_k) = (Y_{k} - M_{I_k J_k})^2 - (Y_{k} - M^\star_{I_k J_k})^2
    = (M_{I_k J_k} - M^\star_{I_k J_k})^2 - 2\varepsilon_{k}(M_{i_k j_k} - M^\star_{i_k j_k}),
\end{align*}
and let $R_n(M, M^\star)$ be the corresponding empirical risk function. The population excess risk is then
\[
R(M, M^\star) := \mb{E}[\ell(M, M^\star;Z)] = \frac{1}{d_1 d_2} \| M - M^\star \|_F^2,
\]
which exactly captures the Frobenius norm error up to a scaling factor.

Assume that the entries of $M$ are uniformly bounded, $\|M\|_\infty \le B_1$, where $\|\cdot\|_\infty$ denotes the elementwise maximum norm on matrices; this can be enforced by choosing a prior $\varphi$ supported on such matrices. Assume also that the noise variables $(\varepsilon_k)_{k=1}^n$ are independent, centered, and $\sigma_1^2$-sub-Gaussian. Then Assumption~\ref{assump:overall} holds for any
centered sub-Gaussian noise $\varepsilon$, as verified in
Lemma~\ref{lemma:verify assump}.
Additionally, we assume that the prior $\varphi(M)$ is supported on a compact parameter space $\mathcal M\subset \big\{ M = UV:\, U \in \mb{R}^{d_1 \times H},\,V \in \mb{R}^{H \times d_2}\big\}$ and
\(
\inf_{M\in\mathcal M}\varphi(M) = \varphi_0 > 0.
\)
Let $P_0\in\mb{R}^{d_1\times d_1}$ and $Q_0\in\mb{R}^{d_2\times d_2}$ be two invertible matrices such that the ground-truth matrix admits the rank-normal form
\(
M^\star \;=\; P_0
\begin{pmatrix}
I_r & 0\\
0 & 0
\end{pmatrix}
Q_0 .
\)
We introduce the corresponding transformed variables
\[
U' := P_0^{-1}U, \qquad V' := VQ_0^{-1}, \qquad M' := P_0^{-1}MQ_0^{-1},
\]
so that $M' = U'V'$.
Under the above setting, we have the following explicit PAC-Bayes bound for the excess risk as follows, where a precise RLCT characterization is obtained using blow-up techniques from algebraic
geometry and is technically involved, requiring very careful constructions (c.f.~Appendix~\ref{app:matrix completion}).

\begin{theorem}[PAC-Bayes bound for matrix completion]\label{thm:completion} Under the above setting, suppose $H+r\le d_1+d_2$, then it holds with probability at least $1-\delta$ that, {\color{black} with $L = 32B_1^2 + 4\sigma_1^2$},
    \begin{align*}
        \frac{1}{d_1d_2}  \int_{\mathcal M} \| M - M^\star\|_F^2\;\Pi_n(d\theta)
        \le \frac{2}{\big(1-\frac{\omega L}{2}\big)\omega n} \Big(\lambda \log n - (m-1)\log \log n -\log\varphi_0+ \log\frac{2}{\delta} + C_1\Big),
    \end{align*}
    where $C_1 = \mathcal{O}\big\{
  H^2\,(d_1+d_2)\,
  \log(r d_1 d_2 (1+L))\big\}\;+\; C(\omega,P_0,Q_0)$ {\color{black} with $C(\omega,P_0,Q_0)$ depends only on $P_0,Q_0$ and $\omega$ that grows at most linearly in $\omega$}. The corresponding RLCT $(\lambda,m)$ is given by
\[
(\lambda,m)
=
\begin{cases}
\Big(\dfrac{Hd_2 - Hr + d_1 r}{2},\,1\Big),
& H \le d_1 - d_2 + r,\\[0.4em]
\Big(\dfrac{Hd_1 - Hr + d_2 r}{2},\,1\Big),
& H \le d_2 - d_1 + r,\\[0.4em]
\Big(\dfrac{Hd_2 - Hr + d_1 r}{2} - \dfrac{(H - d_1 + d_2 - r)^2}{8},\,1\Big),
& \substack{H \;\ge\; r + |d_2 - d_1|,\\ H + d_1 + d_2 + r \text{ even},}\\[0.6em]
\Big(\dfrac{Hd_2 - Hr + d_1 r}{2} - \dfrac{(H - d_1 + d_2 - r)^2 + 1}{8},\,2\Big),
& \substack{H \;\ge\; r + |d_2 - d_1|,\\ H + d_1 + d_2 + r \text{ odd}.}
\end{cases}
\]
\end{theorem}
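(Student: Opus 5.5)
The plan is to apply the integral form \eqref{eq:pacbayes-integral} of Theorem~\ref{thm:main}. Lemma~\ref{lemma:verify assump} with $B_0=B_1$ and $\sigma=\sigma_1$ gives $L=32B_1^2+4\sigma_1^2$, so Assumption~\ref{assump:overall} holds and everything reduces to lower bounding the marginal integral
\[
I(n):=\int_{\mathcal M}\exp\Bigl\{-c_\omega n\,\tfrac{1}{d_1d_2}\|UV-M^\star\|_F^2\Bigr\}\,\varphi(dU,dV),\qquad c_\omega=\tfrac{(1+\omega L/2)\omega}{2}.
\]
First, $\varphi\ge\varphi_0$ on $\mathcal M$, so $-\log I(n)\le -\log\varphi_0-\log\int_{\mathcal M}\exp\{\cdots\}\,dU\,dV$, which produces the $-\log\varphi_0$ term. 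Next, pass to $(U',V')$ and use $\|UV-M^\star\|_F^2=\|P_0(U'V'-J_r)Q_0\|_F^2\le\|P_0\|^2\|Q_0\|^2\|U'V'-J_r\|_F^2$, where $J_r=\mathrm{diag}(I_r,0)$. The Jacobian $|\det P_0|^{H}|\det Q_0|^{H}$ and the operator-norm factors depend only on $P_0$, $Q_0$ and $\omega$. They are absorbed into $C(\omega,P_0,Q_0)$, and rescaling $n$ by the constant $c_\omega\|P_0\|^2\|Q_0\|^2/(d_1d_2)$ shifts $\lambda\log n$ by an additive amount that is at most linear in $\omega$.

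This leaves the Lebesgue integral $\int_{W}\exp\{-n\|U'V'-J_r\|_F^2\}\,dU'dV'$ over a neighborhood $W$ of the zero set $\{U'V'=J_r\}$. To get a lower bound, restrict to $W$ and identify its leading asymptotic $\asymp n^{-\lambda}(\log n)^{m-1}$, i.e.\ the RLCT of $\|UV-J_r\|_F^2$. This is the reduced-rank regression RLCT computed by Aoyagi--Watanabe, and I would reproduce it via blow-ups. Write $U'=\begin{pmatrix}A&B\\C&D\end{pmatrix}$ with $A$ of size $r\times r$, and similarly for $V'$. Near a point where $A\approx I$, the change of variables $V'\mapsto$ (coordinates linear in the residual $U'V'-J_r$) absorbs the first $r$ rows, giving $\lambda=\tfrac{r(d_1+d_2-r)}{2}+\lambda'$. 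Here $\lambda'$ is the RLCT of $\|\tilde C\tilde D\|^2$, a rank-zero problem of size $(d_1-r)\times(H-r)\times(d_2-r)$. For that piece, perform successive blow-ups at the origin of the $k\times k$ sub-blocks of $\tilde C$ and $\tilde D$, $k=1,\dots,H-r$. The candidate exponent after the $k$-th blow-up is $\tfrac{k(d_1-r)+k(d_2-r)-k^2+\ldots}{2}$-type (linear minus quadratic in $k$), and one minimizes over integer $k$. When $H$ is small relative to $|d_1-d_2|$ the minimum sits at an endpoint, giving the first two cases. Otherwise it sits at the vertex $k^\ast=(H-d_1+d_2-r)/2$, giving the $-\tfrac{(\cdot)^2}{8}$ correction; when the parity makes $k^\ast$ half-integral, two adjacent $k$ tie, so $m=2$ and the correction becomes $((\cdot)^2+1)/8$. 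The condition $H+r\le d_1+d_2$ keeps the relevant sizes nonnegative.

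To make this non-asymptotic, in each chart after resolution the integrand takes the normal-crossing form $\exp\{-n\prod_j u_j^{2k_j}\,g(u)\}\,\prod_j|u_j|^{h_j}$ with $g$ bounded above on the compact chart. Integrating directly over $[0,1]^d$ gives the explicit bound $\int\ge c\,n^{-\lambda}(\log n)^{m-1}$. The constant $c$ involves the chart volume, $\sup g$ (which is polynomial in entries bounded by $B_1$ and $r$), and a factor of roughly $(1+L)$ coming from $c_\omega\le 1+L$. Taking logs gives $\lambda\log n-(m-1)\log\log n+C_1$, and counting the $\mathcal O(H(d_1+d_2))$ coordinates, each with an $\mathcal O(H)$ log-factor, yields $C_1=\mathcal O\{H^2(d_1+d_2)\log(rd_1d_2(1+L))\}$. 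The main obstacle is the explicit blow-up: choosing the resolution charts so that the minimal exponent ratio is exactly the case formula, verifying the tie or multiplicity in the odd-parity case, and tracking constants uniformly. Taking the minimum over charts suffices for the upper bound on $-\log I(n)$, because a lower bound on the integral only needs one good chart.
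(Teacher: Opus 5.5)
Your outer structure is the paper's: the integral form \eqref{eq:pacbayes-integral}, the bound $\varphi\ge\varphi_0$, the $P_0,Q_0$ normalization, and a linear change of variables near $\mathrm{diag}(I_r,0)$. That change of variables splits off $r(d_1+d_2-r)$ regular coordinates and leaves a rank-zero product $U_4\widehat V_4$ of size $(d_1-r)\times(H-r)\times(d_2-r)$. The rest also matches: recursive blow-ups, a minimum over the number of blow-ups, and a Watanabe Theorem~4.7-type lower bound on one chart.

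\textbf{The gap.} It is in the step that carries the content of the theorem. You describe the candidate exponent as ``linear minus quadratic in $k$''. That function is concave, and a concave function on $\{0,\dots,H-r\}$ is always minimized at an endpoint. So it could never produce the interior regimes, the $(\cdot)^2/8$ correction, or the $m=2$ tie. Your $k^\ast=(H-d_1+d_2-r)/2$ is not even the vertex of $k(d_1-r)+k(d_2-r)-k^2$. The exponent you need is convex.

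\textbf{What the paper computes.} It blows up a single entry $a_{t+1}$ of the current singular block of $\widehat V_4$ at a time, keeping $U_4$ out of the center. A unit-Jacobian shear follows; at the first step it is $\widehat{\bm u}_1=\bm u_1+U_4^{(2)}\widehat{\bm v}_{21}$ and $\widetilde V_4^{(2)}=\widehat V_4^{(2)}-\widehat{\bm v}_{21}\widehat{\bm v}_{12}$. This turns one column of $U_4$, of length $d_1-r$, into a regular coordinate and shrinks the singular block by one row and one column. The center at step $t+1$ therefore has
\[
h(t)=r(d_1+d_2-r)+t(d_1-r)+(H-r-t)(d_2-r-t)=t^2+(d_1-d_2-H+r)\,t+\text{const}
\]
coordinates. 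The Jacobian contributes $|a_{t+1}|^{h(t)-1}$ against the factor $a_{t+1}^2$ in $\Psi$, so the candidate pole is $h(t)/2$; in your factorized version, subtract $r(d_1+d_2-r)$. The leading coefficient $+1$ is what puts the minimum at $t^\star=(H-r+d_2-d_1)/2$ when $0<t^\star<H-r$, with the tie at $t^\star\pm\tfrac12$ in the odd case. The chart obtained from all $H-r$ singular blow-ups followed by one regular blow-up carries every candidate pole $h(0)/2,\dots,h(H-r)/2$. That single chart is what the lower bound uses.

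\textbf{A smaller bookkeeping issue.} Rescaling $n$ by a constant $c$ is not free when $m=2$. If $c<1$, then $-\log\log(cn)$ is not within an $n$-independent constant of $-\log\log n$ uniformly in $n$. The alternative also fails: if the chart constant (which the paper bounds by $(d_1d_2)^{H-r+1}(H-r)^2$) is placed in $\beta$ of Lemma~\ref{lemma:thm 4.7}, it enters $C_1$ additively through the $e^{-\beta}$ factor. That destroys the claimed $\mathcal O\{H^2(d_1+d_2)\log(\cdot)\}$ order. The paper splits the two. The factor $\ge1$ enlarges the effective sample size to $n'\ge n$, at a cost of $\lambda$ times its logarithm. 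The factor $\omega\sigma_{\max}^2(P_0)\sigma_{\max}^2(Q_0)$ becomes $\beta$ and costs only $+\beta$, which is where the ``at most linear in $\omega$'' term comes from.
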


\setcounter{remark}{4}
\begin{remark}
Note that $\lambda$ in the theorem is much smaller than half of the ambient dimension $d_1d_2/2$ and also smaller than half of the total number of parameters in $(U,V)$, namely $H(d_1+d_2)/2$. Instead, it depends on the unknown true rank $r$ of $M^\star$.
The gap between $\lambda$ and $H(d_1+d_2)/2$ becomes particularly large when $d_1$ and $d_2$ are highly unbalanced. For example, if $d_1 \gg d_2 \ge H \gg r$ and $d_2$ is of the same order as $H$, corresponding to the first RCLT regime, then $d_1d_2$ and $H(d_1+d_2)$ are both of order $\mathcal{O}(d_1d_2)$, whereas $\lambda$ is only of order $\mathcal{O}(d_2^2+d_1r)$, which is much smaller. In such settings, the intrinsic complexity is governed by the smaller matrix factor between $U$ and $V$, while the larger factor becomes redundant and highly nonidentifiable. This also explains why the explicit expression of $\lambda$ splits into different regimes depending on the magnitude of $|d_1-d_2|$. A practical example arises in recommender systems, where the number of users $d_1$ is very large while the number of rating levels or item categories $d_2$ remains relatively small.
Moreover, the dependence of the RLCT on the intrinsic dimension of the true parameter, together with its ability to take fractional values beyond half-integers as seen here, is a distinctive feature of singular models. In contrast, the classical BIC penalty does not adapt to low-dimensional structure in the truth. At the same time, this dependence makes data-driven estimation of $\lambda$ challenging, which has motivated extensions of BIC to singular models, known as singular BIC \citep{drton2017bayesian}, for singular model selection.
\end{remark}

\subsection{Regression and classification using ReLU neural networks}
We next consider a ReLU neural network model following the setup of
\citet{nagayasu2023bayesian}, where an upper bound on the RLCT is derived
under a likelihood-based formulation with Gaussian noise, for which the
negative log-likelihood coincides with the squared loss. Our analysis,
however, only requires that the loss function satisfies
Assumption~\ref{assump:overall}, which is sufficient to obtain the same
type of RLCT-based complexity control within the developed Gibbs posterior framework.

Specifically, we consider an $N$-layer fully connected ReLU neural network
with layer widths $(H_1,\dots,H_N)$, where $H_1$ and $H_N$ denote the input
and output dimensions. For an input $x\in\mb{R}^{H_1}$, the network
output is defined recursively by
\begin{align*}
f^{(1)}(x) = x \quad \mbox{and}\quad
f^{(k)}(W,b,x)
=
\sigma\bigl(W^{(k)} f^{(k-1)}(W,b,x) + b^{(k)}\bigr),
\qquad 2 \le k \le N,
\end{align*}
with final output $f^{(N)}(W,b,x)\in\mb{R}^{H_N}$ where $W=(W^{(k)})$ and $b=(b^{(k)})$. Here
$\sigma(t)=\max\{t,0\}$ is the ReLU activation applied componentwise,
$W^{(k)}\in\mb{R}^{H_k\times H_{k-1}}$ are the weight matrices, and
$b^{(k)}\in\mb{R}^{H_k}$ are bias vectors. We denote the full parameter
by $\theta=(W,b)$.

We assume that the data are generated according to a realizable but overparameterized model. Specifically, the response variable satisfies
\[
Y = f^{(N^\star)}(W_0,b_0,X) + \varepsilon,\qquad N^\star\le N
\]
where $(W_0,b_0)$ correspond to a ReLU network with widths $(H_1^\star,\dots,H_{N^\star}^\star)$ such that $H_1^\star=H_1$, $H_{N^\star}^\star=H_N$, $H_k^\star \le H_k$ for $(2\le k\le N^\star-1)$, and $H_{N^\star-1}^\star \le H_k$ for $(N^\star\le k\le N-1)$. When $H_N>1$, this corresponds to a multivariate response regression. Under these conditions, the ReLU network model with layer sizes
$H_1,\ldots,H_N$ is overparameterized yet well-specified, in the sense that
it contains the true data-generating mechanism. In particular, there exists
a (generally non-unique) parameter $\theta^\star=(W^\star,b^\star)$ such
that
$f^{(N)}(W^\star,b^\star,x)=f^{(N^\star)}(W_0,b_0,x)$ for all $x$
(for example, by expanding the matrices $W_0$ and vectors $b_0$ with zeros
to match the larger dimensions). We denote by $\Theta_0$ the set of all
such realizable parameters.

Consider the squared-loss-based Gibbs posterior, where given an observation $Z=(X,Y)$, the loss function is defined as $\ell(\theta;Z) = \big\|Y - f^{(N)}(\theta,X)\big\|_2^2$.
We assume that the prior density $\varphi(\theta)$ is strictly positive and
continuous on the parameter space $\Theta$, and that
$\|f^{(N)}(\theta,x)\|\le B_2$ for all admissible network parameters and
inputs $x$, which can be ensured by restricting $\Theta$ to a compact set.
Under this condition, Assumption~\ref{assump:overall} holds when
$\varepsilon$ is centered $\sigma_2^2$-sub-Gaussian noise, as verified in
Lemma~\ref{lemma:verify assump}.
Combining Theorem~\ref{thm:main} with the upper bound on the RLCT established in Theorem~3.1 of \citet{nagayasu2023bayesian}, we obtain the following excess risk bound.

\begin{theorem}[ReLU regression risk bound]\label{prop:relu}
Under the above setting, the Gibbs posterior satisfies the following excess risk bound with {\color{black} with $L = 32B_2^2 + 4\sigma_2^2$} and probability at least $1-\delta$:
\begin{align*}
     \int_{\Theta} \mb E_X\Big[\bigl\| f^{(N)}(\theta,X)-f^{(N^\star)}(\theta^\star,X)\bigr\|_2^2\Big]\,\Pi_n(d\theta) \le \frac{2}{\big(1-\frac{\omega L}{2}\big)\omega n} \Big[\bar\lambda_{\rm{ReLU}} \log n + \log\frac{2}{\delta} + C_2\Big],
\end{align*}
where $\bar\lambda_{\mathrm{ReLU}} = \frac{1}{2}\big(\sum_{k=2}^{N^\star}H_k^\star(H_{k-1}^\star + 1) \big),\label{eq:relu rlct}$
is an upper bound of the RLCT $\lambda_{\mathrm{ReLU}}$ for the ReLU network model, equal to one half of the number of free parameters in the minimal network, and $C_2$ is a constant independent of $n$.
\end{theorem}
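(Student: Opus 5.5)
The argument combines the two earlier results directly: Lemma~\ref{lemma:verify assump} supplies the Bernstein condition, and Theorem~\ref{thm:main} converts it into a bound governed by a marginal-type integral over the population risk. First, Assumption~\ref{assump:overall} holds for the squared loss in this setting. We identify the hypothesis with $f=f^{(N)}(\theta,\cdot)$. Since $\|f^{(N)}(\theta,x)\|\le B_2$ and the true regression function also lies in the model (it equals $f^{(N)}(\theta^\star,\cdot)$), we apply Lemma~\ref{lemma:verify assump} with $B_0=B_2$ and $\sigma=\sigma_2$, coordinatewise or via the vector excess loss $\|\Delta\|_2^2-2\langle\varepsilon,\Delta\rangle$. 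This gives $L=32B_2^2+4\sigma_2^2$ and the corresponding $b$. For vector outputs, the lemma's argument extends because $\langle\varepsilon,\Delta\rangle$ is sub-Gaussian with variance proxy $\sigma_2^2\|\Delta\|_2^2$. The population excess risk is then exactly $R(\theta,\theta^\star)=\mb E_X\|f^{(N)}(\theta,X)-f^{(N^\star)}(W_0,b_0,X)\|_2^2$, which is the left-hand side of the claimed bound after integrating against $\Pi_n$.

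Second, invoke \eqref{eq:pacbayes-integral} of Theorem~\ref{thm:main} for $\omega\in(0,\bar\omega)$. This reduces the problem to upper bounding
\[
-\log\int_\Theta \exp\{-c\,n\,R(\theta,\theta^\star)\}\,\varphi(d\theta),\qquad c=\tfrac{(1+\omega L/2)\omega}{2}.
\]
Rather than computing the exact RLCT, we use \eqref{eq:explicit} together with the remark that if $\bar\lambda\ge\lambda$, the bracket is at most $\bar\lambda\log n+\mathcal O(1)$. The term $-(m-1)\log\log n\le 0$ is dropped, and the remaining case $\lambda=\bar\lambda$ with larger $m$ is handled by the same bound. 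Theorem~3.1 of \citet{nagayasu2023bayesian} gives $\lambda_{\rm ReLU}\le\bar\lambda_{\rm ReLU}$ for the zeta function built from the Gaussian log-likelihood ratio. In the Gaussian case that log-likelihood ratio equals $R(\theta,\theta^\star)/(2\sigma^2)$, so it is a constant multiple of our $R$. Since the RLCT is unchanged when the function is multiplied by a positive constant, the same upper bound holds for the zeta function of $R$ under our prior.

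The main obstacle is making the passage from ``$\lambda\le\bar\lambda$'' to a non-asymptotic bound with an $n$-independent constant $C_2$. My plan is to bypass resolution of singularities and lower bound the integral directly, mimicking the Nagayasu--Watanabe construction. They exhibit a neighbourhood of a realizing parameter in which $R(\theta,\theta^\star)\le C\|u\|^2$, where $u$ consists of the $2\bar\lambda_{\rm ReLU}$ coordinates corresponding to the free parameters of the minimal network. The remaining coordinates vary in a set of positive volume on which the redundant units are inactive or exactly replicate the truth. Restricting the integral to the set $\{\|u\|\le n^{-1/2}\}\times(\text{that set})$ and using that $\varphi$ is continuous and bounded below there, we get
\[
\int_\Theta e^{-cnR}\,d\varphi\;\ge\; e^{-cC}\,\varphi_{\min}\,\mathrm{vol}\,\cdot n^{-\bar\lambda_{\rm ReLU}}.
\]
Taking $-\log$ yields $\bar\lambda_{\rm ReLU}\log n+C_2$, with $C_2$ depending on $\omega,L$, the prior and the geometry but not on $n$. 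Substituting this into \eqref{eq:pacbayes-integral} gives the stated inequality. The delicate point is checking the local quadratic bound $R\lesssim\|u\|^2$ uniformly on the chosen neighbourhood given ReLU non-smoothness. This needs the truth's preactivations to stay away from kinks on the support of $X$, or else a Lipschitz bound $R\lesssim\|u\|^2$ from the piecewise-linear structure with bounded inputs. The latter is the route I would try first, since ReLU networks are Lipschitz in their parameters on compact sets.
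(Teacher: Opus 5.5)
Your proposal is correct, and your second paragraph already contains the paper's whole argument. The paper proves this theorem in one line: it feeds Lemma~\ref{lemma:verify assump} into Theorem~\ref{thm:main} and uses the explicit form \eqref{eq:explicit}, whose $n$-independent $C_0$ is taken from the Laplace-type asymptotics of Section~\ref{sec:SLT}. It then drops $-(m-1)\log\log n\le 0$ and inserts $\lambda_{\mathrm{ReLU}}\le\bar\lambda_{\mathrm{ReLU}}$ from \citet{nagayasu2023bayesian}. The ``obstacle'' you isolate is not treated separately there.

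Your direct route is genuinely different. You bypass the zeta function and lower bound $\int_\Theta e^{-cnR}\,d\varphi$ by the prior mass of a tube of radius $n^{-1/2}$. The tube sits in the $2\bar\lambda_{\mathrm{ReLU}}$ pinned coordinates around a positive-volume family of realizing parameters, and $R\le C\|u\|^2$ comes from Lipschitz continuity of the network in its parameters. This buys three things:
\begin{itemize}
\item it avoids resolution of singularities and asymptotic expansions entirely;
\item it holds for every $n$, with $C_2$ explicit in the Lipschitz constant, $\inf\varphi$ and the volume of the flat family;
\item it reproves the RLCT upper bound as a by-product.
\end{itemize}
The paper's route is shorter and modular, since any RLCT bound from the literature plugs in. Its constant, however, is only as explicit as the asymptotic theory behind \eqref{eq:explicit}.

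Three details need care if you carry out the direct argument.
\begin{enumerate}
\item When $N>N^\star$, do not pin the pass-through layers to the identity: that adds their weights to the pinned count and overshoots $\bar\lambda_{\mathrm{ReLU}}$. Instead take entrywise-positive invertible weights on the active block, with biases keeping its pre-activations positive on the bounded range of its inputs; this is an open condition. Solve the last layer's active weights and bias from these, for example $W^{(N)}=W_0^{(N^\star)}(A_{N-1}\cdots A_{N^\star})^{-1}$. The tube then surrounds a graph $u=u_0(s)$ rather than a product set, and a unit-Jacobian shear $u\mapsto u-u_0(s)$ handles it.
\item The bound $R\le C\|u\|^2$ needs bounded (or square-integrable) inputs. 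This is implicit in the standing assumption $\|f^{(N)}(\theta,x)\|\le B_2$.
\item For vector outputs, use your vector form of Lemma~\ref{lemma:verify assump}, with $\langle\varepsilon,\Delta\rangle$ sub-Gaussian with proxy $\sigma_2^2\|\Delta\|_2^2$. Applying the scalar lemma coordinatewise does not bound the joint MGF, because the coordinates are dependent through $X$.
\end{enumerate}
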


This theorem shows that $\lambda_{\mathrm{ReLU}}$ is upper bounded by the minimal network that realizes the true regression function, regardless of the size of the neural network used.
A similar risk bound also applies to neural network classification ($H_N=1$) with logistic loss  via Lemma~\ref{lem:logistic}. In particular, if the true log-odds function $f^\star$ is realizable by the ReLU network under consideration, then the RLCT upper bound in \eqref{eq:relu rlct} continues to hold in this classification setting. Due to space constraints, we omit the concrete results and details.

\section{Discussion}
This work studies finite-sample generalization of Gibbs posteriors under
general loss functions and singular model structures. By specializing
PAC-Bayes bounds to the Gibbs posterior, we obtain a marginal-type
integral over the parameter space that can be analyzed using singular
learning theory. A Bernstein-type condition linking empirical and
population risks allows this analysis to proceed without requiring
explicit metric entropy control, leading to potentially sharp
generalization bounds for overparameterized and singular models.

In such settings, the generalization bounds are governed by the real log canonical threshold (RLCT), which provides a sharper characterization than existing approaches that identify and leverage low-dimensional structure in the population risk. For example, approaches based on restricted local convexity measure complexity using quantities such as the Gaussian width of directions with non-flat curvature and typically yield only coarse upper bounds, whereas the RLCT can be viewed as a more refined, continuous measure of complexity than simply counting dimensions.


\bibliographystyle{apalike}

\bibliography{reference}

\newpage
\appendix

\section{RLCT and Resolution of Singularities}\label{append: blow up}

This appendix provides a brief overview of the derivation of the RLCT and several key results from singular learning theory, with the aim of supporting the main text. For comprehensive treatment and rigorous proofs, we refer readers to \citet{watanabe2009algebraic} and other related works.

\subsection{Derivation of the RLCT and normal crossing form}

To make the definition of RLCT given in \eqref{eq:def rlct} operational, one typically invokes the resolution of singularities (see Theorem~2.3, Hironaka's theorem, in \citet{watanabe2009algebraic}). This result ensures that, in a neighborhood of $\Theta_0$, there exists a finite collection of local coordinate charts $\{U_\kappa\}_\kappa$, each diffeomorphic to $[0,1]^d$, together with analytic changes of variables $\theta = g_\kappa(u)$ defined on $U_\kappa$, such that the union of the images $g_\kappa(U_\kappa)$ covers the region of interest.

Under these local parameterizations, both the excess risk function and the associated Jacobian determinant admit a normal crossing form. Specifically, for each chart $\kappa$, we have
\begin{align*}
R(\theta,\theta^\star)
& =
R(g_\kappa(u))
=
S_\kappa(u)\, u_1^{2k_{\kappa,1}} u_2^{2k_{\kappa,2}} \cdots u_d^{2k_{\kappa,d}},\\
|g_\kappa'(u)| & =
b_\kappa(u)\, u_1^{h_{\kappa,1}} u_2^{h_{\kappa,2}} \cdots u_d^{h_{\kappa,d}},
\end{align*}
where $S_\kappa(u)$ and $b_\kappa(u)$ are analytic and non-vanishing functions on $U_\kappa$, and the integers $k_{\kappa,1},\ldots,k_{\kappa,d} \ge 0$ and $h_{\kappa,1},\ldots,h_{\kappa,d} \ge 0$ depend on the chosen local chart.

Under this normal crossing representation, the poles of the zeta function can be read off explicitly. For each local chart $\kappa$, the candidate poles are given by
\[
\lambda_{\kappa,j}=-\frac{h_{\kappa,j}+1}{2k_{\kappa,j}}, \qquad j=1,\ldots,d,
\]
with the convention that $(h_{\kappa,j}+1)/2k_{\kappa,j} = \infty$ when $k_{\kappa,j}=0$. Consequently, the RLCT and its order are obtained by taking the minimum over all local charts and coordinates and counting the number of directions achieving $\lambda$,
\[
\lambda
\;=\;
\min_{\kappa}\;\min_{1 \le j \le d} \frac{h_{\kappa,j}+1}{2k_{\kappa,j}},
\qquad
m
\;:=\;
\max_{\kappa}
\#\Bigl\{\, j \, :\, \frac{h_{\kappa,j}+1}{2k_{\kappa,j}} = \lambda \Bigr\}.
\]
It is important to emphasize that the integers $(k_{\kappa,j},h_{\kappa,j})$ do not define the RLCT themselves; rather, they provide a convenient local representation for computing the location and multiplicity of the dominant pole of the zeta function within a given coordinate chart. The invariance of the RLCT follows from the fact that it is defined intrinsically through the pole structure of $\zeta(s)$ and therefore does not depend on the particular choice of local charts or normal crossing representations.

\subsection{The blow-up technique}

While the resolution-of-singularities theorem guarantees the existence of a normal crossing representation, it does not directly explain how such a representation is constructed in practice. Geometrically, singularities of analytic functions often arise from the intersection of multiple manifolds, where different components of the zero set meet in a non-smooth manner. The blow-up technique provides a systematic way to resolve such singularities by replacing a singular point with a higher-dimensional geometric object that encodes the local directions along which the singular components intersect.

To illustrate the idea, consider the analytic function
\[
f(x,y,z) = (x^2+y^2)z^2,
\]
whose zero set consists of the union of the $z$-axis $\{x=y=0\}$ and the plane $\{z=0\}$, intersecting at the origin. The singularity at the origin arises precisely from this non-transversal intersection. The key idea of the blow-up is to replace the single point $(0,0,0)$ by a collection of local coordinate charts, each corresponding to a distinct direction along which the singular components separate.

Concretely, we introduce two local coordinate systems via the analytic changes of variables
\[
\begin{aligned}
&\text{Chart 1:}\qquad
x = x_1,\quad y = x_1 y_1,\quad z = z_1, \\
&\text{Chart 2:}\qquad
x = x_2 y_2,\quad y = y_2,\quad z = z_2.
\end{aligned}
\]
The corresponding Jacobian determinants are given by
\[
|g_1'(u)| = |x_1|,
\qquad
|g_2'(u)| = |y_2|.
\]

Under these transformations, the function $f$ becomes
\[
f(x,y,z)
=
x_1^2 z_1^2 (1+y_1^2)
\quad\text{in Chart~1,}
\qquad
f(x,y,z)
=
y_2^2 z_2^2 (1+x_2^2)
\quad\text{in Chart~2.}
\]
Since the analytic factors $(1+y_1^2)$ and $(1+x_2^2)$ are strictly positive and do not introduce any additional singularities, the function is already in normal crossing form on each chart.

We now read off the normal crossing exponents. In Chart~1, the monomial part is $x_1^{2} z_1^{2}$, with exponents $2k_{x_1}=2$, $2k_{z_1}=2$, and $2k_{y_1}=0$, while the Jacobian contributes $h_{x_1}=1$ and $h_{y_1}=h_{z_1}=0$. The corresponding candidate values are
\[
\frac{h_{x_1}+1}{2k_{x_1}}=\frac{2}{2}=1,
\qquad
\frac{h_{z_1}+1}{2k_{z_1}}=\frac{1}{2}.
\]
In Chart~2, the roles of $x$ and $y$ are symmetric, yielding the same candidate values.
Taking the minimum over all charts and coordinates, the real log canonical threshold of $f(x,y,z)=(x^2+y^2)z^2$ is therefore
\[
\lambda = \frac{1}{2},
\qquad
m = 1.
\]

This example illustrates how the blow-up technique separates intersecting components of the zero set into simpler pieces on which the singular structure becomes monomial. The RLCT is then determined by the most singular direction across all local charts, consistent with its definition as the dominant pole of the associated zeta function.

For a more detailed introduction to the blow-up technique and further examples, we refer the reader to Chapters~2 and~3 of \citet{watanabe2009algebraic}, as well as to \citet{aoyagi2005stochastic, aoyagi2005resolution, aoyagi2010bayesian}, which focus on specific classes of models and singularity structures.

\section{Proof of the Main Results}

\subsection{Proof of Lemma~\ref{lemma:verify assump}}
\noindent \begin{proof}
    Conditional on $X = x$, we have
\begin{align}
    & \mb{E}_{\varepsilon}\left[
        \exp\{\omega(\ell(f,f^\star;Z)-R(f,f^\star))\}\,\big|\,X=x
    \right]\nonumber\\
    = & \exp\{\omega(\Delta_f(x)^2 - \mb{E}_X[\Delta_f(X)^2])\}
       \mb{E}_{\varepsilon}\left[\exp\{-2\omega\varepsilon\Delta_f(x)\}\right].
       \label{eq:condition-on-X}
\end{align}
Since $\varepsilon$ is $\sigma^2$-sub-Gaussian, then $\mb{E}_{\varepsilon}\left[\exp\{-2\omega\varepsilon\Delta_f(x)\}\right] \le \exp\{2\sigma^2\omega^2\Delta_f(x)^2\}$ holds for any $\omega\in\mb{R}$. Substituting this bound into \eqref{eq:condition-on-X} and rearranging terms yields, the right-hand side is upper bounded by
\begin{align*}
    \exp\Big\{\big(\omega+2\sigma^2\omega^2\big)\big(\Delta_f(x)^2 - \mb{E}_X[\Delta_f(X)^2]\big) + 2\sigma^2\omega^2\mb{E}_X[\Delta_f(X)^2]\Big\}.
\end{align*}

To derive the MGF under the joint distribution of $Z$, we take the expectation of the above display with respect to $X$. 
Since $|\Delta_f(x)|\le 2B_0$ and hence
\[
|\Delta_f(x)^2| \le 4B_0^2,
\qquad
\bigl|\Delta_f(x)^2 - \mb{E}_X[\Delta_f(X)^2]\bigr| \le 4B_0^2.
\]
Moreover,
\[
\mathrm{Var}\bigl(\Delta_f(X)^2\bigr)
\le \mb{E}_X\big[\Delta_f(X)^4\big]
\le (2B_0)^2 R(f,f^\star).
\]

By the standard Bernstein MGF inequality for bounded random variables, we obtain
\begin{align*}
    \mb{E}_X \big[\exp\{t(\Delta_f^2 - \mb{E}_X[\Delta_f^2]) \}\big]
    \le \exp\bigg(\frac{t^2\,4B_0^2 R(f,f^\star)}{2\bigl(1-4tB_0^2/3\bigr)}\bigg)
\end{align*}
for all $t\in(0, 3/(4B_0^2))$. Set $t := \omega + 2\sigma^2\omega^2$, 
and choose $b$ as in \eqref{eq:b and L}.
Then, for any $|\omega|\le 1/(2b)$, we have $|t| \le |\omega| + 2\sigma^2|\omega|^2 = 2|\omega| \le 3/(8B_0^2)$, so that $1 - 4tB_0^2/3 \;\ge\; \frac{1}{2}$.
Consequently,
\begin{align*}
    \log\mb{E}_X \big[\exp\{t(\Delta_f^2 - \mb{E}_X[\Delta_f^2]) \}\big]\le 16B_0^2 \omega^2 R(f,f^\star),
\end{align*}
where we used $t^2\le (2|\omega|)^2 = 4\omega^2$.

Combining the bounds on the conditional and marginal expectations, we conclude that
\begin{align*}
    \log\mb{E}_Z \exp\{\omega(\ell(f,f^\star;Z)-R(f,f^\star))\} \le  2\sigma^2\omega^2 R(f,f^\star) + 16B_0^2 \omega^2 R(f,f^\star),
\end{align*}
for all $|\omega|\le 1/(2b)$. Therefore, Assumption~\ref{assump:overall} with $L =32B_0^2 + 4\sigma^2$.
\end{proof}

\subsection{Proof of Lemma~\ref{lem:logistic}}
\noindent \begin{proof}
    Since we assume that $|f_\theta(x)|\le B_3$ for any $\theta\in\Theta$ and $x$, the excess loss $\ell(f_\theta,f^\star;Z)$ is bounded by $\log(1+e^{B_3})$. Therefore, by a standard Bernstein-type MGF bound for bounded random variables, when $|\omega|< 3/b$,
\[
\ln\mb E_Z\exp\big\{\omega\big(\ell(f_\theta,f^\star;Z) - R(f_\theta,f^\star)\big)\big\} \le \frac{\omega^2 V}{2(1-\omega b/3)},
\]
where $V = \mathrm{Var}(\ell(f_\theta,f^\star;Z))$ and $b = \log(1+e^{B_3})$. To verify Assumption~\ref{assump:overall}, it remains to bound $V$ by a finite multiple of $R(f_\theta,f^\star)$ on a suitable neighborhood of $\theta^\star$.

Under the margin condition, the population risk admits a locally quadratic approximation around the optimal predictor $f^\star$. To see this, the conditional population risk can be written as a function of the scalar $f$:
\[
R(f;X)
:=\mb E_{Y\,|\, X}[\ell(f,Y)]
= \eta \log\bigl(1+e^{-f}\bigr)
+ (1-\eta)\log\bigl(1+e^{f}\bigr).
\]
Then the population excess risk $R(f,f^\star;X)=R(f;X)-R(f^\star;X)$ is smooth and strictly convex in $f$, with a unique minimizer given by $\log(\eta(X)) - \log (1-\eta(X))$. 
So derivatives of $R(\,\cdot\,,f^\star;X)$ coincide with those of $R(\,\cdot\,;X)$.

Differentiating $R(f;X)$ with respect to $f$ yields
\[
R'(f;X)
= \eta\Bigl(-\frac{1}{1+e^{f}}\Bigr)
+ (1-\eta)\Bigl(\frac{e^{f}}{1+e^{f}}\Bigr)
= \frac{e^{f}}{1+e^{f}}-\eta.
\]
Denoting the logistic function by $\sigma(f)=\frac{1}{1+e^{-f}}$, we obtain $R'(f;X)=\sigma(f)-\eta$.
Evaluating at the unique minimizer $f^\star$ gives
\[
R'(f^\star;X)=\sigma(f^\star) - \eta = \frac{\eta/(1-\eta)}{1+\eta/(1-\eta)} - \eta = 0.
\]
We observe that $\sigma(f^\star) = \eta$. Differentiating once more,
\[
R''(f;X)=\sigma'(f)
=\frac{e^{f}}{(1+e^{f})^2}
=\sigma(f)\bigl(1-\sigma(f)\bigr).
\]
Evaluating at $f=f^\star$ and using $\sigma(f^\star)=\eta$ yields
\[
R''(f^\star;X)
=R''(f^\star;X)
=\eta(X)\bigl(1-\eta(X)\bigr),
\]
so under the margin condition we have
\(\tau(1-\tau)\le R''(f^\star;X)\le 1/4\) uniformly in $X$.
Moreover, $R''(f;X)=\sigma(f)(1-\sigma(f))$ is continuous in $f$, and by the bounded-logit
assumption $|f|\le B_3$ its dependence on $f$ is uniformly continuous on $[-B_3,B_3]$.
Hence the second order derivative $R''(f;X)$ is uniformly continuous in $f$ and uniformly bounded away from $0$ in a neighborhood of $f^\star(X)$, uniformly over $X$.

Consequently, a second-order Taylor expansion of $R(f_\theta,f^\star;X)$ around $f^\star(X)$ implies that for any $\gamma>0$, there exists sufficiently small $\delta_\gamma>0$  such that whenever \(|f(X)-f^\star(X)| \le \delta_\gamma\), the conditional excess risk satisfies the two-sided bound
\begin{align*}
    \left| \frac{R(f_\theta,f^\star;X)}{(f_\theta(X)-f^\star(X))^2} - \tfrac12\,\eta(X)\big(1-\eta(X)\big)\right|\le \gamma.
\end{align*}
Choose $\gamma = \frac{1}{4}\tau(1-\tau)$. Using the margin assumption $\tau \le \eta(X)\le 1-\tau$ the fact $\eta(X)(1-\eta(X))\le 1/4$, this yields the uniform bounds
\begin{align}
    & R(f_\theta,f^\star;X)
\;\ge\; \tfrac14\,\tau(1-\tau)\,(f_\theta(X)-f^\star(X))^2,\label{eq:logit bound L2}\\
& R(f_\theta,f^\star;X)
\;\le\;
\Bigl(\tfrac18+\tfrac14\,\tau(1-\tau)\Bigr)\,(f_\theta(X)-f^\star(X))^2.\label{eq:L2 bound logit}
\end{align}
We now restrict the analysis to the local neighborhood \(\Theta_\mathrm{loc}:=\{\theta\in\Theta:\ \sup_X|f_\theta(X)-f^\star(X)|\le\delta_\gamma\}\).
Since restricting the domain of integration can only decrease the value of the integral, the negative logarithm of the restricted integral provides an upper bound for the original quantity.

We now complete the verification of Assumption~\ref{assump:overall} by bounding the variance $V$ in terms of the excess risk on the local neighborhood $\Theta_{\mathrm{loc}}$. Since the logistic loss is $1$-Lipschitz in its argument:
\[
|\ell(f_\theta(X),Y)-\ell(f^\star(X),Y)|
\le
|f_\theta(X)-f^\star(X)|.
\]
In addition, $\mathrm{Var}(\ell(f_\theta,f^\star;Z))\le \mb E_X\mb E_{Y\,|\,X}[\ell^2 (f_\theta,f^\star;Z)]$. Combining this with the lower bound in \eqref{eq:logit bound L2}, we obtain, for $\theta\in \Theta_\mathrm{loc}$,
\[
\mb{E}_{Y\,|\,X}\left[\ell^2 (f_\theta,f^\star;Z)\right]
\le
\frac{4}{\tau(1-\tau)}\,R(\theta,\theta^\star;X).
\]
Taking expectation with respect to $X$ on both sides and using that $1-\frac{\omega b}{3}\ge \tfrac12$ for all $|\omega|\le3/(2b)$, we verify Assumption~\ref{assump:overall} for all $|\omega|\le 1/(2b)$ with $b=\log(1+e^{B_3})$ and \(L=8/\big[\tau(1-\tau)\big]\), which completes the proof.

Moreover, the two-sided bounds \eqref{eq:logit bound L2} and \eqref{eq:L2 bound logit} imply that, on $\Theta_{\mathrm{loc}}$, the logistic-loss-based excess risk $R(f_\theta,f^\star)$ is equivalent, up to positive multiplicative constants, to the squared $L^2$ distance $\mb{E}_X(f_\theta(X)-f^\star(X))^2$, which provides the justification for the discussion above Lemma~\ref{lem:logistic}.
\end{proof}

\subsection{Proof of Theorem~\ref{thm:main}}\label{append:proof thm5}
\noindent \begin{proof}
We first present a proof sketch and then justify the stated inequalities in the following.

The starting point is Assumption~\ref{assump:overall}, which yields a two-sided sub-exponential control of the centered loss. In particular, by rearranging \eqref{eq:bernstein}, one obtains that with $\bar\omega$ defined in Theorem~\ref{thm:main}, for any $\omega\in(0,\bar\omega)$ and all $\theta\in\Theta$,
\begin{align}
    \mb{E}_Z\bigl[\exp\{-\omega\ell(\theta,\theta^\star; Z)\}\bigr]
    &\le
    \exp\bigl\{-(1-\tfrac{\omega L}{2})\,\omega\,R(\theta,\theta^\star)\bigr\},
    \label{eq:sketch-mgf-minus}
    \\
    \mb{E}_Z\bigl[\exp\{\omega\ell(\theta,\theta^\star; Z)\}\bigr]
    &\le
    \exp\bigl\{(1+\tfrac{\omega L}{2})\,\omega\,R(\theta,\theta^\star)\bigr\}.
    \label{eq:sketch-mgf-plus}
\end{align}
Applying the standard PAC-Bayes change-of-measure argument with \eqref{eq:sketch-mgf-minus} yields, for any probability measure $\rho \ll \varphi$ and any $\delta\in(0,1)$, a high-probability bound of the form
\begin{align}
    \int R(\theta,\theta^\star)\,\Pi_n(d\theta) \;\le\; \frac{1}{1-\omega L/2}\int R_n(\theta,\theta^\star)\,\rho(d\theta) + \frac{\mathrm{KL}(\rho\|\varphi)+\log(1/\delta)}{(1-\omega L/2)\,\omega n}.\label{eq:R bound by Rn}
\end{align}
A second application of the same argument, now using \eqref{eq:sketch-mgf-plus}, gives
\begin{align}
    \int R_n(\theta,\theta^\star)\,\rho(d\theta) \;\le\; \bigl(1+\tfrac{\omega L}{2}\bigr)\int R(\theta,\theta^\star)\,\rho(d\theta) + \frac{\mathrm{KL}(\rho\|\varphi)+\log(1/\delta)}{\omega n}.\label{eq:Rn bound by R}
\end{align}
Replacing $\delta$ by $\delta/2$ in both inequalities and combining them via a union bound yields, for any $\rho\ll\varphi$ and any $\delta\in(0,1)$, with probability at least $1-\delta$,
\begin{align}
    \int R(\theta,\theta^\star)\,\Pi_n(d\theta)
    \;\le\;
    \frac{1+\omega L/2}{1-\omega L/2}\int R(\theta,\theta^\star)\,\rho(d\theta)
    +
    \frac{2\bigl[\mathrm{KL}(\rho\|\varphi)+\log(2/\delta)\bigr]}{(1-\omega L/2)\,\omega n}. \label{eq:sketch-pac}
\end{align}
To turn \eqref{eq:sketch-pac} into an explicit bound, we now choose $\rho$ to be the Gibbs posterior
\[
\rho_n^\star(d\theta)
\;\propto\;
\exp\Bigl\{-\frac{(1+\omega L/2)\,\omega n}{2}\,R(\theta,\theta^\star)\Bigr\}\,\varphi(d\theta).
\]
For this choice, the risk term and the Kullback--Leibler divergence in \eqref{eq:sketch-pac} combine into a single marginal-type integral:
\begin{align}
\mb{E}_{\theta\sim\rho_n^\star}\Bigl\{\tfrac{(1+\omega L/2)\,\omega n}{2}\,R(\theta,\theta^\star)\Bigr\}
+
\mathrm{KL}(\rho_n^\star\|\varphi) =
-\log \int_\Theta \exp\Bigl\{-\tfrac{(1+\omega L/2)\,\omega n}{2}\,R(\theta,\theta^\star)\Bigr\}\,\varphi(d\theta).
\label{eq:sketch-log-partition}
\end{align}
The PAC-Bayes bound \eqref{eq:sketch-pac} thus reduces to controlling a single Laplace-type integral of the form \eqref{eq:sketch-log-partition}. By the RLCT-based expansion in Section~\ref{sec:SLT}, this integral satisfies
\[
-\log \int_\Theta \exp\Bigl\{-\tfrac{(1+\omega L/2)\,\omega n}{2}\,R(\theta,\theta^\star)\Bigr\}\,\varphi(d\theta)
=
\lambda \log n - (m-1)\log\log n + C_0(\varphi,L,\omega),
\]
where the scaling factor $\tfrac{(1+\omega L/2)\omega}{2}$ only affects the constant $C_0(\varphi,L,\omega)$ and not the coefficients of $\log n$ or $\log\log n$. Substituting this expression back into \eqref{eq:sketch-pac} yields the claimed bound \eqref{eq:explicit}.

\medskip
Now, it suffices to establish the two PAC-Bayes inequalities \eqref{eq:R bound by Rn} and \eqref{eq:Rn bound by R}.
We first introduce the Donsker--Varadhan variational formula (Lemma 2.2 in \citet{alquier2021user}): 
for any measurable $h$ with $e^h\in L_1(\varphi)$ and any $\rho\ll\varphi$,
\begin{align}
    \log \int e^{h(\theta)}\,\varphi(d\theta)
    \ge
    \int h(\theta)\,\rho(d\theta)
    - \mathrm{KL}(\rho\|\varphi).
    \label{DV-formula}
\end{align}

\paragraph{Control of the population risk by the empirical risk.} Next we establish an exponential bound from Assumption~\ref{assump:overall}. 
By inequality \eqref{eq:sketch-mgf-minus}, for any $\omega\in(0,\bar\omega)$, $\mb{E}_Z\Big[\exp\big\{-\omega\ell(\theta,\theta^\star;Z)\big\}\Big] \le \exp\big\{-(1-\tfrac{\omega L}{2})\omega\,R(\theta,\theta^\star)\big\}$.
Since $Z_1,\dots,Z_n$ are i.i.d., this implies that, for any fixed $\theta$,
\[
\mb{E}_{Z^n}\Big[\exp\big\{-\omega n R_n(\theta,\theta^\star)\big\}\Big]
=
\prod_{i=1}^n
\mb{E}_{Z_i}\Big[\exp\big\{-\omega\,\ell(\theta,\theta^\star;Z_i)\big\}\Big]
\le
\exp\big\{-(1-\tfrac{\omega L}{2})\omega n R(\theta,\theta^\star)\big\}.
\]
Equivalently,
\begin{align}
    \mb{E}_{Z^n}\Big[\exp\big\{\!-\omega n R_n(\theta,\theta^\star) + (1-\tfrac{\omega L}{2})\omega n R(\theta,\theta^\star)\big\}\Big] \le 1. \label{eq:pointwise-mgf}
\end{align}
Integrating both sides of \eqref{eq:pointwise-mgf} with respect to the prior $\varphi$ and applying Fubini's theorem, we obtain
\begin{align}
    \mb{E}_{Z^n} \int \exp\big\{-\omega n R_n(\theta,\theta^\star) +(1-\tfrac{\omega L}{2})\omega n R(\theta,\theta^\star)\big\} \,\varphi(d\theta) \;\le\; 1. \label{eq:mgf-integrated}
\end{align}
Now fix an arbitrary probability measure $\rho\ll\varphi$. For each realization of $Z^n$, define $h_{Z^n}(\theta) = -\omega n R_n(\theta,\theta^\star) + (1-\tfrac{\omega L}{2})\omega n R(\theta,\theta^\star)$. Then choosing $\rho = \Pi_n$ and applying the inequality \eqref{DV-formula} with this random function $h_{Z^n}$,
\[
\int e^{h_{Z^n}(\theta)}\,\varphi(d\theta) \;\ge\; \exp\Big\{ \int h_{Z^n}(\theta)\,\Pi_n(d\theta) - \mathrm{KL}(\Pi_n\|\varphi) \Big\}.
\]
Taking expectations with respect to $Z^n$ and combining with  \eqref{eq:mgf-integrated} yields
\[
\mb{E}_{Z^n} \exp\Big\{\int\!\big[-\omega n R_n(\theta,\theta^\star)  + (1-\tfrac{\omega L}{2})\omega n R(\theta,\theta^\star)\big]\Pi_n(d\theta) - \mathrm{KL}(\Pi_n\|\varphi) \Big\} \;\le\; 1.
\]
By Markov's inequality, for any $\delta\in(0,1)$, this implies that with probability
at least $1-\delta$,
\[
\int\!\big[-\omega n R_n(\theta,\theta^\star) + (1-\tfrac{\omega L}{2})\omega n R(\theta,\theta^\star)\big]\Pi_n(d\theta) - \mathrm{KL}(\Pi_n\|\varphi) \;\le\; \log(1/\delta).
\]
Rearranging terms gives
\[
(1-\tfrac{\omega L}{2})\omega n \int R(\theta,\theta^\star)\,\Pi_n(d\theta)\;\le\; \omega n \int R_n(\theta,\theta^\star)\,\Pi_n(d\theta) + \mathrm{KL}(\Pi_n\|\varphi) + \log(1/\delta).
\]
Through the interpretation of $\Pi_n$ given in \eqref{eq:interp Pi n} achieves the minimum of the right hand side. Therefore, it is exactly the claimed PAC-Bayes inequality after dividing both sides
by $(1-\tfrac{\omega L}{2})\omega n$.

\paragraph{Control of the empirical risk by the population risk.} Fix any probability measure $\rho\ll\varphi$ and $\omega\in(0,\bar\omega)$.
According to the upper exponential moment bound \eqref{eq:sketch-mgf-plus}, since $Z_1,\dots,Z_n$ are i.i.d., then for any fixed $\theta$,
\[
\mb{E}_{Z^n}\Big[\exp\big\{\omega n R_n(\theta,\theta^\star)\big\}\Big]
=
\prod_{i=1}^n
\mb{E}_{Z_i}\Big[\exp\big\{\omega\,\ell(\theta,\theta^\star;Z_i)\big\}\Big]
\le
\exp\big\{(1+\tfrac{\omega L}{2})\omega n R(\theta,\theta^\star)\big\}.
\]
Equivalently,
\begin{align}
    \mb{E}_{Z^n}\Big[\exp\big\{\omega n R_n(\theta,\theta^\star) -(1+\tfrac{\omega L}{2})\omega n R(\theta,\theta^\star)\big\}\Big] \le 1.
    \label{eq:pointwise-mgf-upper}
\end{align}
Integrating both sides with respect to the prior $\varphi$ and applying Fubini's theorem, we obtain
\begin{align}
    \mb{E}_{Z^n}
    \int 
    \exp\big\{\omega n R_n(\theta,\theta^\star)
               - (1+\tfrac{\omega L}{2})\omega n R(\theta,\theta^\star)\big\}
    \,\varphi(d\theta)
    \;\le\; 1.
    \label{eq:mgf-integrated-upper}
\end{align}

For each realization of $Z^n$, define the random function $h_{Z^n}(\theta) = \omega n R_n(\theta,\theta^\star) - (1+\tfrac{\omega L}{2})\omega n R(\theta,\theta^\star)$.
Now applying \eqref{DV-formula} with $h=h_{Z^n}$, we obtain for every $Z^n$,
\[
\int e^{h_{Z^n}(\theta)}\,\varphi(d\theta)
\;\ge\;
\exp\Big\{
\int h_{Z^n}(\theta)\,\rho(d\theta)
-
\mathrm{KL}(\rho\|\varphi)
\Big\}.
\]
Taking expectations with respect to $Z^n$ and combining with
\eqref{eq:mgf-integrated-upper} yields
\[
\mb{E}_{Z^n} \exp\Big\{ \omega n \int R_n(\theta,\theta^\star)\,\rho(d\theta) - (1+\tfrac{\omega L}{2})\omega n \int R(\theta,\theta^\star)\,\rho(d\theta) - \mathrm{KL}(\rho\|\varphi) \Big\}\;\le\; 1.
\]

By Markov's inequality, for any $\delta\in(0,1)$ this implies that, with probability
at least $1-\delta$,
\[
\omega n \int R_n(\theta,\theta^\star)\,\rho(d\theta)
-
(1+\tfrac{\omega L}{2})\omega n \int R(\theta,\theta^\star)\,\rho(d\theta)
-
\mathrm{KL}(\rho\|\varphi)
\;\le\;
\log(1/\delta).
\]
Finally, rearranging the terms gives
\[
\int R_n(\theta,\theta^\star)\,\rho(d\theta)
\;\le\;
(1+\tfrac{\omega L}{2}) \int R(\theta,\theta^\star)\,\rho(d\theta)
+
\frac{\mathrm{KL}(\rho\|\varphi)+\log(1/\delta)}{\omega n}.
\]

Combining the two PAC-Bayesian inequalities \eqref{eq:R bound by Rn} and \eqref{eq:Rn bound by R} with the RLCT-based expansion of the log-partition function completes the proof.
\end{proof}

\subsection{Auxiliary lemmas for Theorem~\ref{thm:completion}}

Before proving Theorem~\ref{thm:completion}, we first introduce some technical lemmas that help us transfer the population excess risk into a normal crossing form.

\begin{lemma}\label{lem:canonical_M*}
    Let $P_0, Q_0$ be two invertible $d_1\times d_1$ and $d_2\times d_2$ matrices, respectively such that 
    \[
    M^\star = P_0\begin{pmatrix}
        I_r & 0\\
        0 & 0
    \end{pmatrix}Q_0.
    \]
    Then by denoting $U' = P_0^{-1}U$, $V' = VQ_0^{-1}$ and $M' = P_0^{-1} M Q_0^{-1}$ and $M_r = \begin{pmatrix} I_r & 0 \\ 0 & 0 \end{pmatrix}$,
    \begin{align*}
        \sigma^2_{\min}(P_0) \sigma^2_{\min}(Q_0) \left\|M'- M_r\right\|_F^2\le\|M - M^\star\|_F^2\le \sigma^2_{\max}(P_0) \sigma^2_{\max}(Q_0)\left\|M' - M_r\right\|_F^2,
    \end{align*}
    where $\sigma_{\min}(\cdot)$ and $\sigma_{\max}(\cdot)$ represent the smallest and largest singular values of a matrix respectively. Moreover, the mapping $(U,V)\mapsto(U',V')$ is an invertible linear change of variables on $\mathbb R^{d_1 H + H d_2}$ whose Jacobian determinant is $|J| = |\det(P_0)|^{-H}|\det(Q_0)|^{-H}$.
\end{lemma}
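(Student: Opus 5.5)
The plan is to reduce everything to one algebraic identity and the standard two-sided bound $\sigma_{\min}(A)\|X\|_F\le\|AX\|_F\le\sigma_{\max}(A)\|X\|_F$, together with its right-multiplication analogue. First I will observe that, by definition of $M'$ and by the rank-normal form of $M^\star$,
\[
M - M^\star = P_0 M' Q_0 - P_0 M_r Q_0 = P_0\,(M'-M_r)\,Q_0 .
\]
Setting $X := M'-M_r$, it suffices to bound $\|P_0 X Q_0\|_F$ above and below by $\|X\|_F$.

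For the two-sided bound I will use the column and row structure of the product. For any $A\in\mathbb R^{d_1\times d_1}$ and $X\in\mathbb R^{d_1\times d_2}$, write $\|AX\|_F^2=\sum_j\|Ax_j\|_2^2$ over the columns $x_j$ of $X$. Each summand lies between $\sigma_{\min}^2(A)\|x_j\|_2^2$ and $\sigma_{\max}^2(A)\|x_j\|_2^2$, since $A$ is square and invertible. Applying the same reasoning to the rows gives the analogous statement for right multiplication by $Q_0$; equivalently, one can transpose and use $\sigma(Q_0^\top)=\sigma(Q_0)$. Chaining the two estimates,
\[
\sigma_{\min}^2(P_0)\sigma_{\min}^2(Q_0)\|X\|_F^2\le \sigma_{\min}^2(P_0)\|XQ_0\|_F^2\le\|P_0XQ_0\|_F^2\le\sigma_{\max}^2(P_0)\sigma_{\max}^2(Q_0)\|X\|_F^2,
\]
which is exactly the claimed inequality.

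For the Jacobian, I will vectorize the change of variables. The map $U\mapsto P_0^{-1}U$ acts column by column on the $H$ columns of $U$, so in $\mathrm{vec}$ form it is $I_H\otimes P_0^{-1}$, with determinant $\det(P_0)^{-H}$. Similarly, $V\mapsto VQ_0^{-1}$ acts on each of the $H$ rows of $V$ by $Q_0^{-\top}$, i.e.\ $Q_0^{-\top}\otimes I_H$ in $\mathrm{vec}$ form, with determinant $\det(Q_0)^{-H}$. Since the joint map on $(U,V)$ is block diagonal, its determinant is the product, and taking absolute values gives $|J|=|\det P_0|^{-H}|\det Q_0|^{-H}$. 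Invertibility and linearity are immediate, since the inverse map is $(U',V')\mapsto(P_0U',V'Q_0)$.

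No step is a genuine obstacle. The only care needed is to get the Kronecker and determinant orientation right, namely that the exponent is $H$, the number of columns of $U$ and rows of $V$, rather than $d_1$ or $d_2$. The resulting determinant should be stated as the Jacobian of the forward map $(U,V)\mapsto(U',V')$.
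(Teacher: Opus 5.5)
Your proposal is correct and follows essentially the same route as the paper: the identity $M-M^\star=P_0(M'-M_r)Q_0$, a two-sided Frobenius bound, and the vectorized block-diagonal map with blocks $I_H\otimes P_0^{-1}$ and $Q_0^{-\top}\otimes I_H$ for the Jacobian. The only cosmetic difference is the lower bound. You apply $\|Ax\|_2\ge\sigma_{\min}(A)\|x\|_2$ column by column, whereas the paper inverts to $M'-M_r=P_0^{-1}(M-M^\star)Q_0^{-1}$ and uses $\|A^{-1}\|_{\mathrm{op}}=1/\sigma_{\min}(A)$; the two arguments are equivalent.
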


\noindent \begin{proof}
By the definitions of $M^\star$ and $M'$, we have
\[
M^\star = P_0 M_r Q_0 \quad \text{and} \quad M = P_0 M' Q_0.
\]
Subtracting these two identities yields
\[
M - M^\star
= P_0 M' Q_0 - P_0 M_r Q_0
= P_0 (M' - M_r) Q_0.
\]

Using the sub-multiplicativity of the Frobenius norm with respect to the operator norm, we obtain
\begin{align*}
\|M - M^\star\|_F
&= \|P_0 (M' - M_r) Q_0\|_F \\
&\le \|P_0\|_{\mathrm{op}}\,\|(M' - M_r) Q_0\|_F 
\;\le\; \|P_0\|_{\mathrm{op}}\,\|M' - M_r\|_F\,\|Q_0\|_{\mathrm{op}}.
\end{align*}
Since for the Euclidean norm $\|A\|_{\mathrm{op}} = \sigma_{\max}(A)$, we square both sides and obtain the upper bound
\begin{equation}\label{eq:upper_canonical}
    \|M - M^\star\|_F^2
    \;\le\;
    \sigma_{\max}^2(P_0)\,\sigma_{\max}^2(Q_0)\,\|M' - M_r\|_F^2.
\end{equation}

Conversely, from $M = P_0 M' Q_0$ and $M^\star = P_0 M_r Q_0$ we have
\[
M' - M_r = P_0^{-1} (M - M^\star) Q_0^{-1}.
\]
Taking Frobenius norms and applying sub-multiplicativity again gives
\begin{align*}
\|M' - M_r\|_F
&= \|P_0^{-1} (M - M^\star) Q_0^{-1}\|_F \le \|P_0^{-1}\|_{\mathrm{op}}\,\|M - M^\star\|_F \,\|Q_0^{-1}\|_{\mathrm{op}}.
\end{align*}
Using $\|A^{-1}\|_{\mathrm{op}} = 1/\sigma_{\min}(A)$, this becomes
\[
\|M' - M_r\|_F
\;\le\;
\frac{1}{\sigma_{\min}(P_0)\,\sigma_{\min}(Q_0)}\,\|M - M^\star\|_F,
\]
so that
\[
\sigma_{\min}(P_0)\,\sigma_{\min}(Q_0)\,\|M' - M_r\|_F
\;\le\;
\|M - M^\star\|_F.
\]
Squaring both sides yields the lower bound
\begin{equation}\label{eq:lower_canonical}
    \sigma_{\min}^2(P_0)\,\sigma_{\min}^2(Q_0)\,\|M' - M_r\|_F^2
    \;\le\;
    \|M - M^\star\|_F^2.
\end{equation}
Combining \eqref{eq:upper_canonical} and \eqref{eq:lower_canonical} gives the desired two-sided inequality.

It remains to compute the Jacobian of the mapping $(U,V)\mapsto(U',V')$.
We have, in vectorized form,
\[
\mathrm{vec}(U') = (I_H \otimes P_0^{-1})\,\mathrm{vec}(U),
\qquad
\mathrm{vec}(V') = ((Q_0^{-1})^{\top} \otimes I_H)\,\mathrm{vec}(V),
\]
where $\otimes$ denotes the Kronecker product. Thus the linear map on
$\mb R^{d_1 H + H d_2}$ corresponding to $(U,V)\mapsto(U',V')$ is block-diagonal
with blocks $I_H\otimes P_0^{-1}$ and $(Q_0^{-1})^{\top}\otimes I_H$, so its Jacobian determinant is
\begin{align*}
|J|
&= \bigl|\det(I_H\otimes P_0^{-1})\bigr|\,
   \bigl|\det((Q_0^{-1})^{\top}\otimes I_H)\bigr| \\
&= |\det(P_0^{-1})|^H\,|\det((Q_0^{-1})^{\top})|^H
= |\det(P_0)|^{-H}\,|\det(Q_0)|^{-H},
\end{align*}
which is a positive constant depending only on $P_0$ and $Q_0$. This completes the proof.
\end{proof}

\begin{lemma}\label{lem:finite_operator}
Let $A_1\in\mathbb{R}^{H_3\times H_2}$, $A_2\in\mathbb{R}^{H_3\times H_1}$ and $A\in\mathbb{R}^{H_2\times H_1}$ be three given matrices.
Define $D := \max\bigl\{\,2\|A\|_F^2 + 1,\ 2\,\bigr\}$. Then
\[
\frac{1}{D}\bigl(\|A_1\|_F^2 + \|A_2\|_F^2\bigr)
\;\le\;
\|A_1\|_F^2 + \|A_2 + A_1A\|_F^2
\;\le\;
D\bigl(\|A_1\|_F^2 + \|A_2\|_F^2\bigr).
\]
\end{lemma}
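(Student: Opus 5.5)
The plan is to treat the map $(A_1,A_2)\mapsto(A_1,\,A_2+A_1A)$ as a linear transformation $T$ on $\mathbb{R}^{H_3\times H_2}\times\mathbb{R}^{H_3\times H_1}$, whose inverse is $(B_1,B_2)\mapsto(B_1,\,B_2-B_1A)$. It then suffices to bound the operator norms of $T$ and $T^{-1}$ with respect to the norm $\|(A_1,A_2)\|^2=\|A_1\|_F^2+\|A_2\|_F^2$. Applying the upper bound for $T^{-1}$ at the point $T(A_1,A_2)$ gives the lower bound. Since $T^{-1}$ has the same form with $A$ replaced by $-A$ and $\|{-A}\|_F=\|A\|_F$, one estimate serves both directions.

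\textbf{Upper bound.} First, by sub-multiplicativity, $\|A_1A\|_F\le\|A_1\|_F\|A\|_{\mathrm{op}}\le\|A_1\|_F\|A\|_F$. Second, the elementary inequality $\|X+Y\|_F^2\le 2\|X\|_F^2+2\|Y\|_F^2$ gives
\[
\|A_1\|_F^2+\|A_2+A_1A\|_F^2\le \|A_1\|_F^2+2\|A_2\|_F^2+2\|A\|_F^2\|A_1\|_F^2
=(1+2\|A\|_F^2)\|A_1\|_F^2+2\|A_2\|_F^2 .
\]
This is at most $\max\{2\|A\|_F^2+1,\,2\}\bigl(\|A_1\|_F^2+\|A_2\|_F^2\bigr)=D\bigl(\|A_1\|_F^2+\|A_2\|_F^2\bigr)$, as required.

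\textbf{Lower bound.} Set $B_1=A_1$ and $B_2=A_2+A_1A$, so that $A_2=B_2-B_1A$. The same computation with $-A$ in place of $A$ yields
\[
\|A_1\|_F^2+\|A_2\|_F^2=\|B_1\|_F^2+\|B_2-B_1A\|_F^2\le D\bigl(\|B_1\|_F^2+\|B_2\|_F^2\bigr).
\]
Dividing by $D$ gives exactly the left inequality.

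No step is a real obstacle. The only point needing care is the constant: the factor $2$ from $(a+b)^2\le 2a^2+2b^2$ lands on both blocks, so $D$ must be taken as the maximum of $2\|A\|_F^2+1$ and $2$. The symmetry under $A\mapsto -A$ is what lets the same $D$ control both sides.
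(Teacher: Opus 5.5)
Your proposal is correct and follows essentially the same route as the paper: the upper bound uses $\|X+Y\|_F^2\le 2\|X\|_F^2+2\|Y\|_F^2$ together with $\|A_1A\|_F\le\|A\|_F\|A_1\|_F$, and the lower bound writes $A_2=(A_2+A_1A)-A_1A$ and repeats the same estimate. Your framing via the inverse map with $A$ replaced by $-A$ is exactly the paper's computation.
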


\noindent \begin{proof}
For the upper bound, we use the elementary inequality
\[
\|X+Y\|_F^2 \le 2\|X\|_F^2 + 2\|Y\|_F^2
\]
and the sub-multiplicativity of the Frobenius norm,
\[
\|A_1A\|_F \le \|A_1\|_F\,\|A\|_F.
\]
Thus
\begin{align*}
\|A_1\|_F^2 + \|A_2 + A_1A\|_F^2
&\le \|A_1\|_F^2 + 2\|A_2\|_F^2 + 2\|A_1A\|_F^2 \\
&\le (2\|A\|_F^2 + 1)\,\|A_1\|_F^2 + 2\|A_2\|_F^2.
\end{align*}
By the definition of $D$, we have $D \ge 2\|A\|_F^2 + 1$ and $D \ge 2$, hence
\[
(2\|A\|_F^2 + 1)\,\|A_1\|_F^2 + 2\|A_2\|_F^2
\le D\|A_1\|_F^2 + D\|A_2\|_F^2
= D\bigl(\|A_1\|_F^2 + \|A_2\|_F^2\bigr),
\]
which proves the desired upper bound.

For the lower bound, write $A_2 = (A_2 + A_1A) - A_1A$. Then
\begin{align*}
\|A_2\|_F^2
&= \|(A_2 + A_1A) - A_1A\|_F^2 \\
&\le 2\|A_2 + A_1A\|_F^2 + 2\|A_1A\|_F^2 \\
&\le 2\|A_2 + A_1A\|_F^2 + 2\|A\|_F^2\,\|A_1\|_F^2.
\end{align*}
Adding $\|A_1\|_F^2$ to both sides yields
\[
\|A_1\|_F^2 + \|A_2\|_F^2
\le 2\|A_2 + A_1A\|_F^2 + (2\|A\|_F^2 + 1)\,\|A_1\|_F^2.
\]
Again using $D \ge 2$ and $D \ge 2\|A\|_F^2 + 1$, we have
\[
\|A_1\|_F^2 + \|A_2\|_F^2
\le D\|A_2 + A_1A\|_F^2 + D\|A_1\|_F^2
= D\bigl(\|A_1\|_F^2 + \|A_2 + A_1A\|_F^2\bigr).
\]
Dividing both sides by $D$ gives
\[
\frac{1}{D}\bigl(\|A_1\|_F^2 + \|A_2\|_F^2\bigr)
\le
\|A_1\|_F^2 + \|A_2 + A_1A\|_F^2,
\]
which is the claimed lower bound.
\end{proof}

\begin{lemma}[Modified Theorem 4.7 in \citet{watanabe2009algebraic}]\label{lemma:thm 4.7}
    Let $m,r$ be natural numbers. Let $k,h\in\mb Z_{\ge0}^m$ and $k',h'\in\mb Z_{\ge0}^r$ be multi-indices satisfying
\[
\frac{h_1+1}{2k_1} = \frac{h_2+1}{2k_2} = \cdots = \frac{h_m+1}{2k_m} =:\lambda \qquad \text{and} \qquad \frac{h'_j+1}{2k'_j} > \lambda, \qquad j=1,\dots,r.
\]
Fix a constant $\beta>0$, and define for $n>1$
\[
Z(n) =
\int_{[0,1]^m}dx
\int_{[0,1]^r}dy\;
\exp\bigl\{-n\beta\, x^{2k} y^{2k'}\bigr\}\,
x^h y^{h'},
\]
where
\(
x^h=\prod_{i=1}^m x_i^{h_i},
\ y^{h'}=\prod_{j=1}^r y_j^{h'_j},
\ x^{2k}=\prod_{i=1}^m x_i^{2k_i},
\ y^{2k'}=\prod_{j=1}^r y_j^{2k'_j}.
\)
Then for all $n>1$,
\[
Z(n) \;\ge\;
\frac{(\log n)^{m-1}}{n^\lambda} \cdot \prod_{j=1}^m \frac{1}{h'_j+1}\cdot \frac{1}{2^m(m-1)! k_1k_2\cdots k_m} \cdot \frac{1}{\lambda e^\beta}.
\]
\end{lemma}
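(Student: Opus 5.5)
The plan is to reduce $Z(n)$ to a one-dimensional Laplace-type integral in the single variable $u=x^{2k}$ and then bound that integral from below on the window $u\in[0,1/n]$.

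\textbf{Step 1: remove the $y$-variables.} On $[0,1]^r$ we have $y^{2k'}\le 1$. Hence $\exp\{-n\beta x^{2k}y^{2k'}\}\ge \exp\{-n\beta x^{2k}\}$, and the integrand factorizes. Integrating $y^{h'}$ over $[0,1]^r$ gives exactly $\prod_{j=1}^r (h'_j+1)^{-1}$. This is the product appearing in the statement, whose index range should read $j=1,\dots,r$. So
\[
Z(n)\ge \prod_{j=1}^r\frac{1}{h'_j+1}\int_{[0,1]^m}\exp\{-n\beta x^{2k}\}\,x^h\,dx .
\]
The hypothesis $(h'_j+1)/(2k'_j)>\lambda$ is not needed for this lower bound.

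\textbf{Step 2: monomial change of variables.} Substitute $t_i=x_i^{2k_i}$, so that $dx_i=(2k_i)^{-1}t_i^{1/(2k_i)-1}dt_i$ and $x_i^{h_i}=t_i^{h_i/(2k_i)}$. Because every coordinate has the same ratio $(h_i+1)/(2k_i)=\lambda$, each factor becomes $x_i^{h_i}dx_i=(2k_i)^{-1}t_i^{\lambda-1}dt_i$. The $x$-integral therefore equals
\[
\frac{1}{2^m k_1\cdots k_m}\int_{[0,1]^m}\exp\Big\{-n\beta\prod_i t_i\Big\}\prod_i t_i^{\lambda-1}\,dt .
\]

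\textbf{Step 3: reduce to the product variable.} Write $t_i=e^{-s_i}$ with $s_i\ge0$ and set $S=\sum_i s_i$. The integrand depends on $t$ only through $u=\prod_i t_i=e^{-S}$. The measure $\prod_i t_i^{\lambda-1}dt_i$ becomes $e^{-\lambda S}\prod_i ds_i$. The $(m-1)$-dimensional simplex $\{s\ge0:\sum_i s_i=S\}$ contributes the factor $S^{m-1}/(m-1)!$. Changing back to $u=e^{-S}$ gives
\[
\int_{[0,1]^m}\exp\Big\{-n\beta\prod_i t_i\Big\}\prod_i t_i^{\lambda-1}dt=\frac{1}{(m-1)!}\int_0^1 e^{-n\beta u}\,u^{\lambda-1}(-\log u)^{m-1}\,du .
\]
This identity is the only step requiring some care: one must check the Jacobian and the simplex volume. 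It can also be verified by induction on $m$.

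\textbf{Step 4: lower bound on the window $u\le 1/n$.} Restrict the $u$-integral to $[0,1/n]$, where $n>1$. On this interval $e^{-n\beta u}\ge e^{-\beta}$ and $-\log u\ge\log n$. Moreover $\int_0^{1/n}u^{\lambda-1}du=n^{-\lambda}/\lambda$. Therefore the $u$-integral is at least $e^{-\beta}(\log n)^{m-1}n^{-\lambda}/\lambda$.

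Combining the constants from Steps 1–4 gives
\[
Z(n)\ge\frac{(\log n)^{m-1}}{n^\lambda}\cdot\prod_{j=1}^r\frac{1}{h'_j+1}\cdot\frac{1}{2^m(m-1)!\,k_1\cdots k_m}\cdot\frac{1}{\lambda e^\beta},
\]
which is the stated bound (with the product index range corrected to $j=1,\dots,r$).
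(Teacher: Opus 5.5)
Your proposal is correct and follows the paper's argument: drop $y^{2k'}\le 1$ to factor out $\prod_{j=1}^r (h'_j+1)^{-1}$ (you are right that the index should run over $j=1,\dots,r$), reduce to a one-dimensional integral against the state density, and restrict to the window $u\le 1/n$, which is exactly the paper's window $t\in[0,1]$ under the rescaling $t=nu$. The only difference is that you derive the density $\frac{1}{2^m(m-1)!\,k_1\cdots k_m}\,u^{\lambda-1}(-\log u)^{m-1}$ directly, via the substitutions $t_i=x_i^{2k_i}$ and $t_i=e^{-s_i}$, whereas the paper cites Watanabe's Theorem~4.6 for it.
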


\noindent \begin{proof}
    The evaluation of singular integrals of the form $\int_{[0,1]^m} \exp\bigl(-a x^{2k}\bigr)\,x^{h}\,dx$
    is conveniently handled via the \emph{state density function} introduced in \citet{watanabe2009algebraic}.
    For fixed exponents $k_1,\dots,k_m\in\mb Z_{\ge0}$ and $h_1,\dots,h_m\in\mb Z_{\ge0}$, and a scaling parameter $a>0$, define
    \[
        v(t)
        := \int_{[0,1]^r} \delta\bigl(t - a x^{2k}\bigr)\,x^{h}\,dx,
    \]
    where $\delta(\cdot)$ denotes the Dirac delta function: for any continuous test function $\varphi$ and fixed $x$,
    \[
        \int_0^\infty \varphi(t)\,\delta\bigl(t-a x^{2k}\bigr)\,dt
        = \varphi\bigl(a x^{2k}\bigr).
    \]
    Informally, the factor $\delta(t-a x^{2k})$ concentrates all mass at the level set where $a x^{2k}=t$, so that $v(t)\,dt$ measures the volume of those parameters $x\in[0,1]^m$ for which $a x^{2k}$ falls into the interval $[t,t+dt]$.
    The advantage of introducing $v(t)$ is that it reduces the original $m$–dimensional integral to a one–dimensional integral in $t$, and Theorem~4.6 in \citet{watanabe2009algebraic} gives an explicit expression for $v(t)$, making the dependence on $a$ (and hence on $n$ in our setting) completely transparent.

    More precisely, assume that there exists a rational number $\lambda>0$ such that
    \[
        \frac{h_1+1}{2k_1}
        = \frac{h_2+1}{2k_2}
        = \cdots
        = \frac{h_r+1}{2k_r}
        = \lambda.
    \]
    Then, by Theorem~4.6 of \citet{watanabe2009algebraic}, there exists a constant
    \[
        \gamma
        = \frac{1}{2^m(m-1)!k_1\cdots k_m} \;>\;0
    \]
    such that
    \[
        v(t)
        =
        \begin{cases}
            \gamma\,\dfrac{t^{\lambda-1}}{a^{\lambda}}
            \Bigl(\log \dfrac{a}{t}\Bigr)^{m-1}, & 0<t<a,\\[1ex]
            0, & \text{otherwise}.
        \end{cases}
    \]
    This formula shows that, once the exponents $(h,k)$ are fixed, the state density has the small–$t$ behavior
    \(t^{\lambda-1}(\log(a/t))^{m-1}/a^\lambda\), and thus any integral involving $x^{2k}$ can be rewritten as a one–dimensional integral whose leading $a$–dependence is explicit.

    We now apply this representation to bound $Z(n)$.
    Since $y^{2k'}\le 1$ on $[0,1]^m$, we have
    \begin{align*}
        Z(n)
        &\ge \int_{[0,1]^m} dx \int_{[0,1]^r} dy\;
        \exp\bigl(-n\beta x^{2k}\bigr)\,x^{h}y^{h'}\\
        &= \prod_{j=1}^m \frac{1}{h'_j+1} \int_{[0,1]^m} dx\;
        \exp\bigl(-n\beta x^{2k}\bigr)\,x^{h}\\
        &= \prod_{j=1}^m \frac{1}{h'_j+1} 
           \int_0^\infty dt \int_{[0,1]^m} dx\;
           \delta\bigl(t - n x^{2k}\bigr)\,e^{-\beta t} x^{h}.
    \end{align*}
    Define
    \[
        v_n(t)
        := \int_{[0,1]^m} \delta\bigl(t - n x^{2k}\bigr)\,x^{h}\,dx.
    \]
    With $a=n$, the above applies to $v_n(t)$ and yields
    \[
        v_n(t)
        =
        \begin{cases}
            \gamma_1\,\dfrac{t^{\lambda-1}}{n^{\lambda}}
            \Bigl(\log \dfrac{n}{t}\Bigr)^{m-1}, & 0<t<n,\\[1ex]
            0, & \text{otherwise},
        \end{cases}
    \]
    where
    \[
        \gamma_1
        = \dfrac{1}{2^m(m-1)!k_1\cdots k_m}>0.
    \]
    Substituting this expression into the previous display, we obtain
    \begin{align*}
        Z(n)
        & \ge\; \prod_{j=1}^m \frac{1}{h'_j+1}\cdot \gamma_1 \int_0^n dt\;
    \frac{t^{\lambda-1}}{n^{\lambda}}
    \Bigl(\log\frac{n}{t}\Bigr)^{m-1} e^{-\beta t}\\
    & \ge\; \prod_{j=1}^m \frac{1}{h'_j+1}\cdot \gamma_1 \int_0^1 dt\;
    \frac{t^{\lambda-1}}{n^{\lambda}}
    (\log n )^{m-1} e^{-\beta t}\\
    & = \frac{(\log n)^{m-1}}{n^\lambda} \cdot \prod_{j=1}^m \frac{1}{h'_j+1}\cdot \frac{\gamma_1}{\lambda e^\beta},
    \end{align*}
    which completes the proof.
\end{proof}

\subsection{Proof of Theorem~\ref{thm:completion}}\label{app:matrix completion}
The structure of this proof is analogous to the argument in \citet{aoyagi2005stochastic} that derives the RLCT for reduced rank regression. Here we supply the omitted intermediate steps, and explicitly track the ranges of all parameters, the Jacobians of the transformations, and the Frobenius-norm–related factor that contributes to the constant term in Theorem~\ref{thm:completion}.

As we discussed in Section~\ref{subsec:matrix completion}, the population excess risk is expressed as
\[
R(M, M^\star) := \mb{E}[\ell(M, M^\star;Z)] = \frac{1}{d_1 d_2} \| M - M^\star \|_F^2 = \frac{1}{d_1 d_2} \| UV - M^\star \|_F^2.
\]
Assumption~\ref{assump:overall} has been verified for centered sub-Gaussian noise $\varepsilon$ in the main context. Then by Theorem~\ref{thm:main},
to derive the explicit PAC-Bayes bound for $\| M - M^\star \|_F^2$, we need to derive the normal crossing form of $\| UV - M^\star \|_F^2$ and track the Jacobian of the resolution map. 
According to Lemma~\ref{lem:canonical_M*}, $\|UV-M^\star\|_F^2$ is two-sidedly bounded by its canonical form  $\|U'V' - M_r\|$, with the Jacobian determinant being $|J| = |\det(P_0)|^{-H}|\det(Q_0)|^{-H}>0$. Therefore, it suffices to focus on the transformation and resolution of singularities of the latter. 

The proof proceeds in three steps. 
First, we apply a sequence of invertible transformations to rewrite the squared Frobenius norm into a form that is amenable to blow-up analysis. 
Second, we perform a blow-up to derive explicit expressions for the RLCT $\lambda$ and its multiplicity $m$. 
Finally, we invoke Theorem~4.7 of \citet{watanabe2009algebraic} to obtain the constant term $C_1$.

\paragraph{Step 1.}
To determine the RLCT, it suffices to restrict attention to a local neighborhood of
$M_r$. Indeed, for any $\eta>0$, the integral defining the zeta function is holomorphic
on the complement region $\{M' : \|M'-M_r\|_\infty \ge \eta\}$ and hence contributes no
poles. Therefore, only a local analysis around $M_r$ is relevant.

In what follows, we focus on the non-trivial case
$0<r<H\le\min\{d_1,d_2\}$.
After applying the rank-normalizing transformations
$U' = P_0^{-1}U$ and $V' = VQ_0^{-1}$,
we block-partition
\[
U' =
\begin{pmatrix}
U_1 & U_3\\
U_2 & U_4
\end{pmatrix},\qquad
V' =
\begin{pmatrix}
V_1 & V_3\\
V_2 & V_4
\end{pmatrix},\qquad
M' =
\begin{pmatrix}
M_1 & M_3\\
M_2 & M_4
\end{pmatrix},
\]
where $U_1,V_1,M_1$ are $r\times r$ blocks.

By assumption, the transformed parameter space is chosen sufficiently large so that
all configurations considered below lie within its support.
We further restrict attention to the region \(\|V_1-I_r\|_\infty \le \frac{1}{2r}\),
under which $V_1$ is invertible and satisfies \((1/2)^r \le |\det(V_1)| \le (3/2)^r\).
On this region, define the change of variables
\[
\widehat U_1 := U_1V_1 + U_3V_2 - M_r,\quad
\widehat U_2 := U_2V_1 + U_4V_2,\quad
\widehat V_4 := V_4 - V_2V_1^{-1}V_3,\quad
\widehat V_3 := V_1^{-1}V_3 + U_3\widehat V_4.
\]
This transformation is smooth and invertible. Writing the change of variables in the
direction
\(
(U_1,U_2,V_3,V_4)\;\mapsto\;(\widehat U_1,\widehat U_2,\widehat V_3,\widehat V_4)
\)
with $(U_3,U_4,V_1,V_2)$ held fixed, the induced Jacobian factor satisfies
\begin{align}
    dU_1\,dU_2\,dV_3\,dV_4
& =
|\det(V_1)|^{\,d_2-d_1-r}\,
d\widehat U_1\,d\widehat U_2\,d\widehat V_3\,d\widehat V_4 \nonumber\\
& \ge 2^{-r|d_2-d_1-r|}\,d\widehat U_1\,d\widehat U_2\,d\widehat V_3\,d\widehat V_4.\label{eq:Jacobian1}
\end{align}
Substituting the reparametrization into $U'V'-M_r$ yields
\[
U'V'-M_r
=
\begin{pmatrix}
\widehat U_1 &
\widehat U_1(\widehat V_3-U_3\widehat V_4)+\widehat V_3\\
\widehat U_2 &
\widehat U_2(\widehat V_3-U_3\widehat V_4)+U_4\widehat V_4
\end{pmatrix}.
\]

After the change of variables, the parameters are
\[
\theta' = (\widehat U_1,\widehat U_2,U_3,U_4,V_1,V_2,\widehat V_3,\widehat V_4),
\]
among which only
$\widehat U_1,\widehat U_2,U_3,U_4,\widehat V_3,\widehat V_4$
appear in the expression of $U'V'-M_r$.
We now restrict attention to a subset $\mathcal B$ of the transformed parameter space
defined by the bounds
\[
\|\widehat U_1\|_\infty,\ \|\widehat U_2\|_\infty,\ \|U_4\|_\infty, \ \|\widehat V_4\|_\infty
\le 1,
\qquad \ \|V_2\|_\infty\le \frac{1}{2},
\]

\[
\|\widehat V_3\|_\infty \le \frac{1}{2r(d_2-r)},
\qquad
\|U_3\|_\infty \le \frac{1}{2r(H-r)(d_2-r)}.
\]
Under these bounds,
\[
\|\widehat V_3-U_3\widehat V_4\|_\infty \le \frac{1}{r(d_2-r)}\implies
\|\widehat V_3-U_3\widehat V_4\|_F^2 \le 1.
\]
Therefore, by Lemma~\ref{lem:finite_operator},
\begin{align}
    \|U'V'-M_r\|_F^2 \le 3\Big(\|\widehat U_1\|_F^2 + \|\widehat U_2\|_F^2 + \|\widehat V_3\|_F^2  + \|U_4\widehat V_4\|_F^2 \Big) =: 3\cdot\Psi.\label{eq:psi}
\end{align}

Using the lower bound $\varphi_0$ of the prior density, the Jacobian estimate above,
and the definition of the excess risk, we obtain
\begin{align}
& \int_\Theta
\exp\Big\{-\tfrac{(1+\omega L/2)\,\omega n}{2}R(M,M^\star)\Big\}\,\varphi(d\theta)\nonumber \\
& \qquad\qquad\;\ge\;
\varphi_0\,
2^{-r|d_2-d_1-r|}\cdot
\int_{\mathcal B}
\exp\Big\{
-\tfrac{(1+\omega L/2)\,\omega n}{2d_1d_2}\cdot 3\,\Psi
\Big\}
\,d\theta'.
\label{eq:lower integral 1}
\end{align}
Finally, we rescale each entry in $V_1,V_2,\widehat V_3$ and $U_3$ to the interval $[-1,1]$, introducing an additional explicit scaling factor 
\begin{align}
    r^{-r^2}\cdot(2r(d_2-r))^{-r(d_2-r)}\cdot (2r(H-r)(d_2-r))^{-r(H-r)},\label{eq:rescale1}
\end{align}
corresponding to the side lengths of $\mathcal B$. As a result, the integral is reduced to this explicit prefactor multiplied by an integral over $[-1,1]^N$, where $N = (d_1+d_2)H$ is the total number of entries in $U'$ and $V'$,
For notational convenience, we continue to denote the rescaled variables by the same symbols and the considered rescaled local coordinate as $\m S$.

\paragraph{Step 2.}
We now analyze the squared Frobenius norm
\[
\Psi = \|\widehat U_1\|_F^2 + \|\widehat U_2\|_F^2
+ \frac{1}{4r^2(d_2-r)^2}\|\widehat V_3\|_F^2
+ \|U_4\widehat V_4\|_F^2,
\]
which is in a form amenable to resolution of singularities. Recall from Step~1 that, after restricting to $\mathcal B$ and rescaling, all entries of the variables appearing in $\Psi$ take values in $[-1,1]$. 
For notational convenience, we keep the same symbols for these rescaled variables.

We begin by constructing a blow-up of $\Psi$ along the submanifold \(\{\widehat U_1=0, \widehat U_2=0, \widehat V_3=0, \widehat V_4=0\}\).
Here we focus on describing the explicit blow-up procedure and the resulting local
coordinate representations; standard references on the general framework can be found
in \citet{watanabe2009algebraic}.

\paragraph{First blow-up step.} To describe the blow-up explicitly, we work on a local chart in which one entry of
$\widehat U_1$ serves as the radial variable.
Without loss of generality, we choose the $(1,1)$-entry and set
\[
(\widehat U_1)_{11}=b_1,\qquad b_1\in[-1,1].
\]
We then parametrize the remaining variables by
\begin{align}
    (\widehat U_1)_{ij}=b_1\,\widetilde U_{1,ij}\quad((i,j)\neq(1,1)),\qquad
\widehat U_2=b_1\,\widetilde U_2,\qquad
\widehat V_3=b_1\,\widetilde V_3,\qquad
\widehat V_4=b_1\,\widetilde V_4.\label{eq:blow1}
\end{align}

Since the original variables are restricted to $[-1,1]$, the rescaled variables
$\widetilde U_{1,ij},\widetilde U_2,\widetilde V_3,\widetilde V_4$
can also be taken to range in $[-1,1]$, without exceeding the original parameter
bounds. We denote this new local chart as $\m S'$.
Substituting this parametrization into $\Psi$ yields
\begin{align*}
\Psi \,& =\, \|\widehat U_1\|_F^2 + \|\widehat U_2\|_F^2 + \frac{1}{4r^2(d_2-r)^2} \|\widehat V_3\|_F^2 + \|U_4\widehat V_4\|_F^2\\
\,& \le\, b_1^2\Big(1 + \sum_{(i,j)\ne(1,1)} (\widetilde U_{1,ij})^2  + \|\widetilde U_2\|_F^2 + \|\widetilde V_3\|_F^2 + \|U_4\widetilde V_4\|_F^2
\Big),
\end{align*}
where the expression in parentheses is bounded below by $1$ and hence is uniformly bounded away from zero. We note that, in the argument below, whenever a cube of side length less than $1$ is rescaled to the unit cube, a prefactor smaller than $1$ appears in front of the rescaled term like the $1/[4r^2(d_2-r)^2]$ here. Hence, if we simply discard this prefactor, the resulting expression can only increase, and therefore still provides a valid upper bound for $\Psi$. Consequently, $\Psi$ is already in normal crossing form in this chart.
Moreover, on the region where all entries of the rescaled variables lie in $[-1,1]$,
the sum in parentheses admits the explicit upper bound
\begin{align}
    1 + (r^2-1) + (d_1-r)r + r(d_2-r) + (d_1-r)(d_2-r)(H-r)^2\le d_1d_2(H-r)^2.\label{eq:psi multiple1}
\end{align}

Indeed, the first three Frobenius norms are bounded by the numbers of their entries
since each entry is in $[-1,1]$, while $U_4\widetilde V_4$ is a
$(d_1-r)\times(d_2-r)$ matrix whose $(i,j)$-th entry is a sum of $(H-r)$ products,
hence satisfies $|(U_4\widetilde V_4)_{ij}|\le H-r$ and thus
\(
\|U_4\widetilde V_4\|_F^2\le (d_1-r)(d_2-r)(H-r)^2.
\)
Consequently, $\Psi$ admits a normal crossing form in this chart.
The Jacobian of the change of variables in \eqref{eq:blow1} is $|b_1|^{\,h(0)-1}$, where for any $0\le t\le H-r$, we define
\begin{align}
    h(t)=r(d_1+d_2-r)+t(d_1-r)+(H-r-t)(d_2-r-t).\label{eq:h(t)}
\end{align}
Here $h(t)$ is a function introduced for convenience to express the RLCT, note that $h(0)$ is the total dimension in $(\widehat U_1, \widehat U_2, \widehat V_3, \widehat V_4)$. It essentially represents the number of free parameters at each step. Since $h(t)$ is a quadratic function of $t$ with positive leading coefficient, it is convex in $t$, and therefore it achieves maximum at $t=0$ or $H-r$,
\begin{align}
    h(t)\;\le\;
r(d_1+d_2-r) + (H-r)\cdot \max\{d_1-r, d_2-r\}\le H(d_1+d_2-r).\label{eq:h(t) upper}
\end{align}

In this chart, it yields a candidate pole at
\begin{align}
    \lambda_{b_1} = \frac{(h(0)-1)+1}{2} = \frac{h(0)}{2},\qquad m_{b_1} = 1.\label{eq:pole1}
\end{align}
By symmetry, choosing any single entry of $\widehat U_1$, $\widehat U_2$, or
$\widehat V_3$ as the radial variable leads to the same contribution and hence the
same candidate pole.

We now consider blow-ups along directions corresponding to entries of $\widehat V_4$.
By symmetry, it suffices to analyze a local chart in which the $(1,1)$-entry of
$\widehat V_4$ serves as the radial variable.
We write
\[
U_4 = \bigl(\bm u_1,\; U_4^{(2)}\bigr)\in\mb R^{(d_1-r)\times(H-r)},
\qquad
\widehat V_4 =
\begin{pmatrix}
v_{11} & \bm v_{12}\\
\bm v_{21} & V_4^{(2)}
\end{pmatrix}\in\mb R^{(H-r)\times(d_2-r)},
\]
where $\bm u_1\in\mb R^{d_1-r}$ is the first column of $U_4$,
$v_{11}$ is the $(1,1)$-entry of $\widehat V_4$,
$\bm v_{12}\in\mb R^{1\times(d_2-r-1)}$ is the remainder of the first row,
$\bm v_{21}\in\mb R^{(H-r-1)\times 1}$ is the remainder of the first column,
and $V_4^{(2)}\in\mb R^{(H-r-1)\times(d_2-r-1)}$ is the lower-right block.

On this chart we take
\[
v_{11} = a_1,\quad a_1\in[-1,1],
\]
and perform the blow-up by scaling all other entries in 
$\{\widehat U_1,\widehat U_2,\widehat V_3,\widehat V_4\}$:
\begin{align}
    \widehat U_1 = a_1\,U^{(2)}_1,\qquad
\widehat U_2 = a_1\,U^{(2)}_2,\qquad
\widehat V_3 = a_1\,V^{(2)}_3,\nonumber\\
{\bm v}_{12} = a_1\,\widehat {\bm v}_{12},\qquad
{\bm v}_{21} = a_1\,\widehat {\bm v}_{21},\qquad
V_4^{(2)} = a_1\,\widehat V_4^{(2)}.\label{eq:blow up V4 1}
\end{align}
The Jacobian determinant of this step is $a_1^{h(0)-1}$. To make the product term explicit, note that
\begin{align*}
    U_4\widehat V_4
& =
\bigl(\bm u_1,\; U_4^{(2)}\bigr)
\begin{pmatrix}
a_1 & a_1\widehat {\bm v}_{12}\\
a_1\widehat {\bm v}_{21} & a_1\widehat V_4^{(2)}
\end{pmatrix}\\
& =
a_1\Bigl(
\bm u_1 + U_4^{(2)}\widehat {\bm v}_{21},\;\;
\bm u_1\widehat {\bm v}_{12} + U_4^{(2)}\widehat V_4^{(2)}
\Bigr).
\end{align*}

Let $\widehat{\bm u}_1 := \bm u_1 + U_4^{(2)}\widehat {\bm v}_{21}$ and $\widetilde V_4^{(2)} := \widehat V_4^{(2)} - \widehat {\bm v}_{21}\widehat {\bm v}_{12}$.
The Jacobian of this change of variables is equal to one, and in the new coordinates
the product reduces to
\[
U_4\widehat V_4
=
a_1\Bigl(\widehat{\bm u}_1,\; \widehat{\bm u}_1\widehat{\bm v}_{12} + U_4^{(2)}\widetilde V_4^{(2)}\Bigr).
\]

We now address the domain constraints and apply Lemma~\ref{lem:finite_operator} to
control the mixed terms.
Assume
\[
\|U_4^{(2)}\|_\infty\le \frac{1}{H-r},
\qquad
\|\widehat{\bm v}_{12}\|_\infty \le \frac{1}{2},
\]
so that
\[
\|U_4^{(2)}\widehat{\bm v}_{21}\|_\infty\le\frac{1}{2},
\qquad
\|\widehat{\bm v}_{21}\widehat{\bm v}_{12}\|_\infty\le \frac{1}{2}.
\]
Since all other coordinates remain in $[-1,1]$, these bounds imply that the current chart contains a hyperrectangle on which
\begin{align}
    \|U_4^{(2)}\|_\infty\le \frac{1}{H-r},
\qquad
\|\widehat{\bm v}_{12}\|_\infty \le \frac{1}{2},\qquad \|\widehat{\bm u}_1\|_\infty \le \frac{1}{2},
\qquad
\|\widetilde V_4^{(2)}\|_\infty \le \frac{1}{2},\label{eq:domain 2}
\end{align}
while the remaining variables range in $[-1,1]$. Moreover, since $\|\widehat{\bm u}_1\widehat{\bm v}_{12}\|_\infty\le 1/2$, its Frobenius norm
satisfies
\[
\|\widehat{\bm u}_1\widehat{\bm v}_{12}\|_F^2
\;\le\;
\frac{(d_1-r)(d_2-r-1)}{4}.
\]

Therefore, by Lemma~\ref{lem:finite_operator},
\begin{align*}
    \|U_4\widehat V_4\|_F^2
& \le\;
a_1^2\Bigl(\tfrac{(d_1-r)(d_2-r-1)}{2}+1\Bigr)
\bigl(\|\widehat{\bm u}_1\|_F^2 + \|U_4^{(2)}\widetilde V_4^{(2)}\|_F^2\bigr)\\ 
& \le\; a_1^2\,d_1d_2 \bigl(\|\widehat{\bm u}_1\|_F^2 + \|U_4^{(2)}\widetilde V_4^{(2)}\|_F^2\bigr).
\end{align*}
Therefore, we obtain, on this local chart,
\begin{align}
    \Psi \le
    a_1^2 d_1d_2
    \bigl(\|U^{(2)}_1\|_F^2 + \|U^{(2)}_2\|_F^2 + \|V^{(2)}_3\|_F^2
          + \|\widehat{\bm u}_1\|_F^2 + \|U_4^{(2)}\widetilde V_4^{(2)}\|_F^2\bigr).
    \label{eq:psi2}
\end{align}
Next, we rescale the active coordinates restricted as in \eqref{eq:domain 2} so that each ranges over $[-1,1]$ and denote the new local chart as $\m S_1$. This
change of variables contributes the Jacobian factor
\begin{align}
    (H-r)^{-(d_1-r)(H-r-1)}\cdot 2^{-d_1+r - (H-r)(d_2-r-1)}.
    \label{eq:rescale2}
\end{align}
On the right-hand side of \eqref{eq:psi2}, the squared norm $\|\widehat{\bm u}_1\|_F^2$, is now a regular quadratic form as well. Since the expression in \eqref{eq:psi2} is not yet in normal crossing form, we proceed with a second blow-up.

\paragraph{Second blow-up step.} At this stage there are two possible choices for the next blow-up along $\{U^{(2)}_1 =0, U^{(2)}_2 = 0,V^{(2)}_3 = 0, \widehat{\bm u}_1 = 0, \widetilde V_4^{(2)}=0\}$.
The first option is to perform a blow-up in one of the regular directions similarly as we did in \eqref{eq:blow1}. To describe this blow-up explicitly, without
loss of generality, we choose the $(1,1)$-entry of $U^{(2)}_1$ and set
\[
(U^{(2)}_1)_{11}=b_2,\qquad b_2\in[-1,1].
\]
We then parametrize the remaining variables by
\begin{align}
    (U^{(2)}_1)_{ij} =b_2\,\widetilde U^{(2)}_{1,ij},\;((i,j)& \neq(1,1)), \qquad
    U^{(2)}_2 =b_2\,\widetilde U^{(2)}_2,\qquad
    V^{(2)}_3=b_2\,\widetilde V^{(2)}_3, \nonumber\\ 
    \widehat{\bm u}_1& =b_2\,\widetilde{\bm u}_1,\qquad
    \widetilde V_4^{(2)}=b_2\,\bar V_4^{(2)}.
    \label{eq:blow2}
\end{align}
The Jacobian of this change of variables therefore contributes a factor $|b_2|^{\,h(1)-1}$. As defined in \eqref{eq:h(t)}, $h(1)$ denote the total number of coordinates in
$(U^{(2)}_1,U^{(2)}_2,V^{(2)}_3,\widehat{\bm u}_1,\widetilde V_4^{(2)})$. Since all these variables are already restricted to $[-1,1]$, the rescaled variables $\widetilde U^{(2)}_1,\widetilde U^{(2)}_2,\widetilde V^{(2)}_3,\widetilde{\bm u}_1,\bar V_4^{(2)}$
can also be taken to range in $[-1,1]$, without exceeding the original parameter
bounds.

Denote the new local chart as $\m S_1'$. Substituting \eqref{eq:blow2} into \eqref{eq:psi2} yields
\begin{align*}
\Psi \le a_1^2\, b_2^2\, d_1d_2\,
\Bigl(1 + \sum_{(i,j)\ne(1,1)} (\widetilde U^{(2)}_{1,ij})^2 + \|\widetilde U^{(2)}_2\|_F^2 + \|\widetilde V^{(2)}_3\|_F^2 + \|\widetilde{\bm u}_1\|_F^2 + \|U_4^{(2)}\bar V_4^{(2)}\|_F^2
\Bigr).
\end{align*}
The expression in parentheses is bounded below by $1$ and hence is uniformly bounded
away from zero. Thus, in the current local coordinate, $\Psi$ is already in normal
crossing form on this chart. The blow-ups along $a_1$ and $b_2$ therefore give rise to
two candidate poles
\[
\lambda_{a_1} = \frac{h(0)}{2},
\qquad
\lambda_{b_2} = \frac{h(1)}{2},
\]
with corresponding multiplicities
\[
m_{a_1} =
\begin{cases}
1, & \text{if } h(0) < h(1),\\
2, & \text{if } h(0) = h(1),
\end{cases}
\qquad
m_{b_2} =
\begin{cases}
1, & \text{if } h(1) < h(0),\\
2, & \text{if } h(1) = h(0),
\end{cases}
\]
depending on whether the exponents $h(0)$ and $h(1)$ coincide. In particular, the
smaller of $\lambda_{a_1}$ and $\lambda_{b_2}$ determines the leading pole contributed
by this chart, while equality $h(0)=h(1)$ results in an increased multiplicity.

Moreover, on the region where all rescaled variables take values in $[-1,1]$, the sum
in parentheses admits the explicit upper bound
\[
1 + (r^2-1) + (d_1-r)r + r(d_2-r) + (d_1-r)
+ (d_1-r)(d_2-r-1)(H-r-1)^2,
\]
where the first five terms come from the Frobenius norms of
$\widetilde U^{(2)}_1,\widetilde U^{(2)}_2,\widetilde V^{(2)}_3,\widetilde{\bm u}_1$, and the last term
bounds $\|U_4^{(2)}\widehat V_4^{(2)}\|_F^2$ using that each entry of
$U_4^{(2)}\widehat V_4^{(2)}$ is a sum of $(H-r-1)$ products of terms in $[-1,1]$.
As in \eqref{eq:psi multiple1}, this factor can be further bounded by
\[
r(d_1+d_2-r) + (d_1-r)
+ (d_1-r)(d_2-r-1)(H-r-1)^2
\;\le\;
d_1d_2(H-r)^2.
\]
Consequently, on this chart we obtain the bound
\begin{align}
    \Psi \;\le\; a_1^2 b_2^2\,(d_1d_2)^2(H-r)^2.\label{eq:psi3}
\end{align}

Now we return to equation \eqref{eq:psi2}. The second option is to carry out a further blow-up in the block $\widetilde V_4^{(2)}$.
By symmetry, it suffices to work on a local chart in which the upper-left entry of
$\widetilde V_4^{(2)}$ serves as the new radial variable. Since $\widetilde V_4^{(2)}$ is
obtained from $\widehat V_4$ by removing its first row and first column, for notational
convenience we continue to label its entries using their original positions in
$\widehat V_4$ in what follows.
 We write
\[
U_4^{(2)} = \bigl(\bm u_2,\; U_4^{(3)}\bigr)\in\mb R^{(d_1-r)\times(H-r-1)},
\qquad
\widetilde V_4^{(2)} =
\begin{pmatrix}
v_{22} & \bm v_{23}\\
\bm v_{32} & V_4^{(3)}
\end{pmatrix}\in\mb R^{(H-r-1)\times(d_2-r-1)},
\]
where $\bm u_2\in\mb R^{d_1-r}$ is the first column of $U_4^{(2)}$,
$v_{22}$ is the $(1,1)$-entry of $\widetilde V_4^{(2)}$,
$\bm v_{23}\in\mb R^{1\times(d_2-r-2)}$ is the remainder of the first row,
$\bm v_{32}\in\mb R^{(H-r-2)\times 1}$ is the remainder of the first column,
and $V_4^{(3)}\in\mb R^{(H-r-2)\times(d_2-r-2)}$ is the lower-right block.

On this chart we take
\[
v_{22} = a_2,\quad a_2\in[-1,1],
\]
and perform the blow-up by scaling all other entries in 
$\{U^{(2)}_1,U^{(2)}_2, V^{(2)}_3 , \widehat {\bm u}_1, \widetilde V_4^{(2)}\}$:
\begin{align}
U^{(2)}_1 &= a_1\,\widehat U^{(2)}_1, &
U^{(2)}_2 &= a_1\,\widehat U^{(2)}_2, &
V^{(2)}_3 &= a_1\,\widehat V^{(2)}_3, &
\widehat {\bm u}_1 &= a_2\,\widetilde {\bm u}_1, \nonumber\\
{\bm v}_{23} &= a_1\, \widehat {\bm v}_{23}, &
{\bm v}_{32} &= a_1\,\widehat {\bm v}_{32}, &
V_4^{(2)}   &= a_1\,\widehat V_4^{(2)}. 
\label{eq:blow-up-V4-1}
\end{align}

Jacobian of this step is $a_2^{h(1)-1}$. To make the product term explicit, note that
\[
U_4^{(2)} \widetilde V_4^{(2)}
=
\bigl(\bm u_2,\; U_4^{(3)}\bigr)
\begin{pmatrix}
a_2 & a_2\widehat {\bm v}_{23}\\
a_2\widehat {\bm v}_{32} & a_2\widehat V_4^{(3)}
\end{pmatrix}
=
a_1\Bigl(
\bm u_1 + U_4^{(2)}\widehat {\bm v}_{21},\;\;
\bm u_1\widehat {\bm v}_{12} + U_4^{(2)}\widehat V_4^{(2)}
\Bigr).
\]
Let
\[
\widehat{\bm u}_2 :=
\bm u_2 + U_4^{(3)}\widehat {\bm v}_{32},
\qquad
\widetilde V_4^{(3)} := \widehat V_4^{(3)} - \widehat {\bm v}_{32}\widehat {\bm v}_{23}.
\]
The Jacobian of this change of variables is equal to one, and in the new coordinates
the product reduces to
\[
U_4^{(2)} \widetilde V_4^{(2)}
=
a_1\Bigl(\widehat{\bm u}_2,\; \widehat{\bm u}_2\widehat{\bm v}_{23} + U_4^{(3)}\widetilde V_4^{(3)}\Bigr).
\]

We now address the domain constraints and apply Lemma~\ref{lem:finite_operator} to
control the mixed terms.
Assume that
\[
\|U_4^{(3)}\|_\infty\le \frac{1}{H-r} < \frac{1}{H-r-1},
\qquad
\|\widehat{\bm v}_{23}\|_\infty \le \frac{1}{2},
\]
so that
\[
\|U_4^{(3)}\widehat{\bm v}_{32}\|_\infty\le\frac{1}{2},
\qquad
\|\widehat{\bm v}_{32}\widehat{\bm v}_{23}\|_\infty\le \frac{1}{2}.
\]
Since all other coordinates remain in $[-1,1]$, these bounds imply that the current chart contains a hyperrectangle on which
\begin{align}
    \|U_4^{(3)}\|_\infty\le \frac{1}{H-r} < \frac{1}{H-r-1},
\quad
\|\widehat{\bm v}_{23}\|_\infty \le \frac{1}{2},\quad \|\widehat{\bm u}_2\|_\infty \le \frac{1}{2},
\quad
\|\widetilde V_4^{(3)}\|_\infty \le \frac{1}{2},\label{eq:domain 3}
\end{align}
while the remaining variables range in $[-1,1]$. Moreover, since $\|\widehat{\bm u}_1\widehat{\bm v}_{12}\|_\infty\le 1/2$, its Frobenius norm
satisfies
\[
\|\widehat{\bm u}_2\widehat{\bm v}_{23}\|_F^2
\;\le\;
\frac{(d_1-r)(d_2-r-2)}{4}.
\]
Therefore, by Lemma~\ref{lem:finite_operator},
\[
\|U_4^{(2)} \widetilde V_4^{(2)}\|_F^2
\;\le\;
a_2^2\Bigl(\tfrac{(d_1-r)(d_2-r-2)}{2}+1\Bigr)
\bigl(\|\widehat{\bm u}_2\|_F^2 + \|U_4^{(3)}\widetilde V_4^{(3)}\|_F^2\bigr) \;\le\; d_1d_2 \bigl(\|\widehat{\bm u}_2\|_F^2 + \|U_4^{(3)}\widetilde V_4^{(3)}\|_F^2\bigr).
\]
Therefore, based on \eqref{eq:psi2} we obtain, on this local chart,
\begin{align}
    \Psi & \le
    a_1^2 d_1d_2
    \bigl(\|U^{(2)}_1\|_F^2 + \|U^{(2)}_2\|_F^2 + \|V^{(2)}_3\|_F^2
          + \|\widehat{\bm u}_1\|_F^2 + \|U_4^{(2)}\widetilde V_4^{(2)}\|_F^2\bigr)\nonumber\\
          & \le
    a_1^2 a_2^2\, (d_1d_2)^2
    \bigl(\|\widehat U^{(2)}_1\|_F^2 + \|\widehat U^{(2)}_2\|_F^2 + \|\widehat V^{(2)}_3\|_F^2
    + \|\widetilde{\bm u}_1\|_F^2 + \|\widehat{\bm u}_2\|_F^2 + \|U_4^{(3)}\widetilde V_4^{(3)}\|_F^2\bigr).
    \label{eq:psi4}
\end{align}

Next, we rescale the active coordinates restricted as in \eqref{eq:domain 3} so that each ranges over $[-1,1]$ and denote the new chart as $\m S_2$. This
change of variables contributes the Jacobian factor
\begin{align}
    (H-r)^{-(d_1-r)(H-r-2)}\cdot 2^{-d_1+r - (H-r-1)(d_2-r-2)}.
    \label{eq:rescale3}
\end{align}
On the right-hand side of \eqref{eq:psi2}, the squared norm $\|\widehat{\bm u}_2\|_F^2$ is now a regular quadratic form as well. Thus we are again in the same situation as in \eqref{eq:psi2}: $\Psi$ is not yet in normal crossing form, and we have two possible choices for the next blow-up, both centered on the submanifold where all entries of $\widehat U^{(2)}_1, \widehat U^{(2)}_2, \widehat V^{(2)}_3, \widetilde{\bm u}_1, \widehat{\bm u}_2$, and $\widetilde V_4^{(3)}$ vanish. The first option is to perform a blow-up in one of the regular directions, by taking any single entry of $\widehat U^{(2)}_1, \widehat U^{(2)}_2, \widehat V^{(2)}_3, \widetilde{\bm u}_1$, or $\widehat{\bm u}_2$ as a new radial variable. The second option is to continue to blow up along the singular block $\widetilde V_4^{(3)}$.

\paragraph{Recursive structure of the blow-up and derivation of the RLCT.} The behavior of these two branches follows a simple recursive pattern. Each time we
perform a blow-up along the $\widehat V_4$–direction, one additional column $\bm u_t$ of $U_4$
is separated out and incorporated into the regular part of the parametrization. Let
$t\ge 0$ denote the number of blow-ups already performed in the $\widehat V_4$–direction and $\m S_t$ the current local chart.
Let $h(t)$, defined as in \eqref{eq:h(t)}, be the dimension of the current center
submanifold after these $t$ steps. In the $(t+1)$-th step we have the following
dichotomy.

\begin{itemize}
\item[(i)] \textbf{Regular blow-up.}
If we choose a regular direction and take one entry of the current blocks
$\widehat U^{(t)}_1, \widehat U^{(t)}_2, \widehat V^{(t)}_3, \widetilde{\bm u}_1\dots \widehat{\bm u}_t$ as a new radial variable
$a_{t+1}$, then the corresponding change of variables contributes a Jacobian factor
$|a_{t+1}|^{\,h(t)-1}$. In the new coordinates $\m S_t'$, $\Psi$ acquires an extra
multiplicative constant bounded by $d_1d_2(H-r)^2$, while the quadratic form in the
remaining variables is smooth and bounded away from zero on the unit cube. Hence
$\Psi$ is already in normal crossing form in the $(a_1,\dots,a_t,a_{t+1})$-chart,
and this branch produces a candidate pole
\[
\lambda_{a_{t+1}}= \frac{h(t)}{2}.
\]

\item[(ii)] \textbf{Singular blow-up along $\widehat V_4$.}
If instead we perform the $(t+1)$-th blow-up in the $\widehat V_4$-block, taking one entry of the current lower-right block $\widetilde V_4^{(t+1)}$ as a new radial variable $b_{t+1}$, then the Jacobian again contributes a factor $|b_{t+1}|^{\,h(t)-1}$. As in the case $t=0$ and $1$, this blow-up separates one additional column $\bm u_{t+1}$ from $U_4$, and, after imposing suitable constraints on the entries and applying Lemma~\ref{lem:finite_operator}, the new expression of $\Psi$ gains an extra global factor bounded by $d_1d_2$. A subsequent linear rescaling that maps the restricted hyperrectangle to the unit cube $[-1,1]^N$ introduces an additional Jacobian factor
\begin{align}
    (H-r)^{-(d_1-r)(H-r-t-1)}\cdot
2^{-d_1+r - (H-r-t)(d_2-r-t-1)},\label{eq:rescale4}
\end{align}
bounded from below by the expression in \eqref{eq:rescale2}, which represents when $t=0$.
In this chart $\m S_{t+1}$, the quadratic form in the remaining variables is again smooth and
bounded away from zero, and the corresponding candidate pole is
\[
\lambda_{b_{t+1}} = \frac{h(t)}{2}.
\]
\end{itemize}

\begin{figure}[t]
\centering
\footnotesize
\begin{tikzpicture}[>=stealth, node distance=2.2cm]

\node (S0) {$\m S$ along $\{\widehat U_1,\widehat U_2,\widehat V_3,\widehat V_4=0\}$};

\node[above=1.0cm of S0] (L0) {$\m S':\lambda_0 = h(0)/2$};
\draw[->] (S0) -- node[left] {\scriptsize reg.} (L0);

\node[right=1.75cm of S0] (S1) {$\mathcal S_1$};
\draw[->] (S0) -- node[below] {\scriptsize $\widehat V_4$} (S1);

\node[above=1.0cm of S1] (L1) {$\m S_1':\lambda_1 = h(1)/2\qquad$};
\draw[->] (S1) -- node[left] {\scriptsize reg.} (L1);

\node[right=1.75cm of S1] (S2) {$\mathcal S_2$};
\draw[->] (S1) -- node[below] {\scriptsize $\widehat V_4^{(2)}$} (S2);

\node[above=1.0cm of S2] (L2) {$\qquad\m S_2':\lambda_2 = h(2)/2$};
\draw[->] (S2) -- node[left] {\scriptsize reg.} (L2);

\node[right=1.75cm of S2] (Sdots) {$\cdots$};
\draw[->] (S2) -- node[below] {\scriptsize $\widehat V_4^{(3)}$} (Sdots);

\node[right=1.75cm of Sdots] (Slast) {$\mathcal S_{H-r}$};
\draw[->] (Sdots) -- node[below] {\scriptsize $\widehat V_4^{(H-r)}$} (Slast);

\node[above=1.0cm of Slast] (Llast) {$\m S_{H-r}':\lambda_{H-r} = h(H-r)/2\quad$};
\draw[->] (Slast) -- node[left] {\scriptsize reg.} (Llast);

\end{tikzpicture}
\caption{Recursive blow-up scheme and candidate poles $\lambda_t = h(t)/2$.}\label{fig:recuresive}
\end{figure}
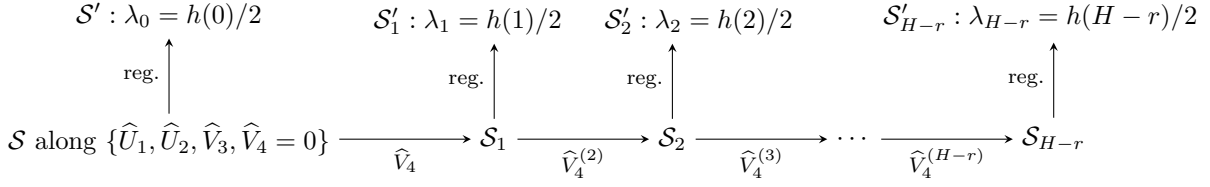

In both cases, the exponent of the new radial variable is determined by $h(t)$, so the value of the candidate pole at step $t+1$ is always $\lambda_{t+1} = h(t)/2$, while the prefactors differ by tracked constants and the explicit rescaling Jacobians. The recursive structure of the blow-up process is illustrated in Figure~\ref{fig:recuresive}. Consequently, the RLCT is given by
\[
\lambda \;=\; \frac{1}{2}\cdot\min_{0\le t\le H-r} h(t),
\]
and the multiplicity $m$ is equal to the number of integers $t\in\{0,\dots,H-r\}$ at which this minimum is attained. Expand the expression of $h(t)$ defined in \eqref{eq:h(t)} gives,
\begin{align*}
h(t) & = r(d_1+d_2-r) + t(d_1-r) + (H-r-t)(d_2-r-t)\\
&=
t^2 + (d_1-d_2-H+r)\,t
+ \Bigl[r(d_1+d_2-r)+(H-r)(d_2-r)\Bigr].
\end{align*}
Thus $h(t)$ is a convex quadratic in $t$ with vertex at
\begin{equation}\label{eq:tstar}
t^\star
=
-\frac{d_1-d_2-H+r}{2}
=
\frac{H-r+d_2-d_1}{2}.
\end{equation}

To determine the RLCT, we compare this vertex $t^\star$ with the admissible range $[0,H-r]$ of $t$ and translate the conditions into explicit inequalities involving $(d_1,d_2,H,r)$.

\noindent\textbf{Case 1: boundary minimum at $t=0$.}
The condition $t^\star\le 0$ is equivalent to
\[
\frac{H-r+d_2-d_1}{2} \;\le\; 0
\quad\Longleftrightarrow\quad
H \;\le\; d_1 - d_2 + r.
\]
In this regime $h(t)$ is increasing with $t$, so the minimum is attained at
$t=0$. Since \(h(0) = Hd_2 - Hr + d_1r\)
we obtain
\[
\lambda = \frac{h(0)}{2} = \frac{Hd_2 - Hr + d_1r}{2},\qquad m=1.
\]

\medskip
\noindent\textbf{Case 2: boundary minimum at $t=H-r$.}
The condition $t^\star\ge H-r$ is equivalent to
\[
\frac{H-r+d_2-d_1}{2} \;\ge\; H-r
\quad\Longleftrightarrow\quad
H \;\le\; d_2 - d_1 + r.
\]
In this regime $h(t)$ is decreasing on $[0,H-r]$, so the minimum is attained at
$t=H-r$. A direct substitution gives \(h(H-r) = Hd_1 - Hr + d_2r,\)
hence
\[
\lambda = \frac{h(H-r)}{2} = \frac{Hd_1 - Hr + d_2r}{2}, \qquad m=1.
\]

\medskip
\noindent\textbf{Case 3: interior minimum.}
If
\(
H \ge r + |d_2 - d_1|,
\)
then $0<t^\star<H-r$ and the continuous minimum of $h(t)$ is attained at $t=t^\star$. On the discrete set of integers $t\in\{0,\dots,H-r\}$, the minimum is achieved at the integer point(s) nearest to $t^\star$, and its value depends on the parity of \(2t^\star = H - d_1 + d_2 - r\).

\begin{itemize}
\item If $2t^\star$ is even, then $t^\star$ is an integer and the discrete minimum coincides
with the continuous minimum. In this case
\[
\lambda
=
\frac{1}{2}h(t^\star)
=
\frac{Hd_2 - Hr + d_1r}{2}
-
\frac{(H - d_1 + d_2 - r)^2}{8},
\qquad
m=1.
\]

\item If $2t^\star$ is odd, then $t^\star$ is a half-integer, and the minimum over integer $t$ is attained at the two adjacent points $t=t^\star\pm\frac12$, which are symmetric around the vertex. Since $h(t)$ has unit leading coefficient, we have \(\min_{t\in\mb Z} h(t) = h\left(t^\star+\tfrac12\right) = h(t^\star)+\frac14,\)
which yields
\[
\lambda
=
\frac{1}{2}\Bigl(h(t^\star)+\frac14\Bigr)
=
\frac{Hd_2 - Hr + d_1r}{2}
-
\frac{(H - d_1 + d_2 - r)^2 + 1}{8},
\qquad
m=2.
\]
\end{itemize}

\paragraph{Step 3.}
In the final step we explicitly calculate the integral
\[
-\log \int_\Theta \exp\Big\{-\tfrac{(1+\omega L/2)\,\omega n}{2}\, R(M,M^\star)\Big\}\varphi(d\theta)
\]
under the new parameterization after resolution of singularities. Since we have already obtained the expressions for $\lambda$ and $m$, the final step is to explicitly derive the constant expression in the upper bound of the excess risk. According to the analysis in Step 2, the fully resolved chart $\m S_{H-r}'$ on the right end of Figure~\ref{fig:recuresive}, where we have performed $H-r$ blow-ups along $\widehat V_4$ and the another blow-up along a regular directions, must contain all the pole candidates. Therefore, we focus on the integral within this local coordinate chart.

On which, by Lemma~\ref{lem:canonical_M*},
\begin{align*}
    \|UV - M^\star\|_F^2\le \sigma^2_{\max}(P_0) \sigma^2_{\max}(Q_0)\left\|U'V' - M_r\right\|_F^2.
\end{align*}
By \eqref{eq:psi}
\begin{align*}
    \|U'V'-M_r\|_F^2 \le 3\Big(\|\widehat U_1\|_F^2 + \|\widehat U_2\|_F^2 + \|\widehat V_3\|_F^2  + \|U_4\widehat V_4\|_F^2 \Big) =: 3\cdot\Psi.
\end{align*}
By \eqref{eq:psi multiple1} and the analysis of the recursive structure of the blow-up,
\begin{align*}
    \Psi & \le
    a_1^2 d_1d_2
    \bigl(\|U^{(2)}_1\|_F^2 + \|U^{(2)}_2\|_F^2 + \|V^{(2)}_3\|_F^2
          + \|\widehat{\bm u}_1\|_F^2 + \|U_4^{(2)}\widetilde V_4^{(2)}\|_F^2\bigr) 
          \\ &\le
    a_1^2 a_2^2\, (d_1d_2)^2
    \bigl(\|\widehat U^{(2)}_1\|_F^2 + \|\widehat U^{(2)}_2\|_F^2 + \|\widehat V^{(2)}_3\|_F^2
    + \|\widetilde{\bm u}_1\|_F^2 + \|\widehat{\bm u}_2\|_F^2 + \|U_4^{(3)}\widetilde V_4^{(3)}\|_F^2\bigr)
    \\ & \le \dots \le a_1^2a_2^2\cdots a_{H-r}^2 b_{H-r+1}^2 \, (d_1d_2)^{H-r+1}(H-r)^2 \\
    & = :\Psi'.
\end{align*}

Denote the complete resolution map from $\m S'_{H-r}$ to a subset of the original parameter space by $\theta = g(\bm a)$, where vector $\bm a$ denotes the collection of parameters on $\m S_{H-r}'$. $\m S_{H-r}'$ is a demorphism to a cube with each side in $[-1, 1]$. We focus on the subset of which each coordinate is positive, the cube of $[0, 1]^N$, where $N = (d_1+d_2)H$ is the total number of entries in $U,V$. Then all the Jacobian determinants and rescaling factors gathered as, by Lemma~\ref{lem:canonical_M*}, equation~\eqref{eq:Jacobian1}, \eqref{eq:rescale1} and \eqref{eq:rescale4}, 
\begin{align*}
    |J_g| &  = C_J\cdot\prod_{t=0}^{H-r-1}(a_{t+1})^{\,h(t)-1}\cdot(b_{H-r+1})^{h(H-r)-1},
\end{align*}
where
\begin{align*}
    C_J \ge \varphi_0& \cdot 2^{-r|d_2-d_1-r|}\cdot r^{-r^2} \cdot(2r(d_2-r))^{-r(d_2-r)} \cdot (2r(H-r)(d_2-r))^{-r(H-r)} \\
    & \cdot \Big((H-r)^{-(d_1-r)(H-r-1)}\cdot 2^{-d_1+r - (H-r)(d_2-r-1)}\Big)^{H-r}\cdot |\det(P_0)|^{-H}|\det(Q_0)|^{-H}.
\end{align*}
We observe that except for the last factor, the negative logarithm of all the others is no greater than $\max\{d_1,d_2\}(H-r)^2\log(2rd_1d_2)$, therefore,
\begin{align*}
    -\log C_J \le H \log\big(|\det(P_0)\cdot\det(Q_0)|\big) + 6\max\{d_1,d_2\}(H-r)^2 \log (2rd_1d_2).
\end{align*}
Moreover, 
\begin{align*}
    & -\log \int_\Theta \exp\Big\{-\tfrac{(1+\omega L/2)\,\omega n}{2}\, R(M,M^\star)\Big\}\varphi(d\theta) \\
    & \qquad \le -\log\int_{[0,1]^N} \exp\Big\{-C_{\Psi'}\cdot\omega \, n\, \Psi' \Big\} \prod_{t=0}^{H-r-1}(a_{t+1})^{\,h(t)-1}\cdot(b_{H-r+1})^{h(H-r)-1} d \bm a - \log (\varphi_0 C_J),
\end{align*}
where $C_{\Psi'} = \big(1+\frac{\omega L}{2}\big) ({3\sigma^2_{\max}(P_0) \sigma^2_{\max}(Q_0)})/({d_1d_2})$. Precisely,

\[
C_{\Psi'}\cdot\Psi' = \big(1+\tfrac{\omega L}{2}\big) \cdot{3\sigma^2_{\max}(P_0) \sigma^2_{\max}(Q_0)}\cdot a_1^2a_2^2\cdots a_{H-r}^2 b_{H-r+1}^2 \, (d_1d_2)^{H-r}(H-r)^2. 
\]
Since $3\bigl(1+\tfrac{\omega L}{2}\bigr)(d_1d_2)^{H-r}(H-r)^2 \ge 1$,
Lemma~\ref{lemma:thm 4.7} applies with the choice
\[
\beta = \omega\,\sigma^2_{\max}(P_0)\sigma^2_{\max}(Q_0),
\qquad
n' = 3\Bigl(1+\tfrac{\omega L}{2}\Bigr)(d_1d_2)^{H-r}(H-r)^2.
\]
Moreover, in the present setting we have $m\le 2$, $k_1 = k_2 = \cdots = k_m = 1$, and
$h(t)$ is uniformly bounded as in \eqref{eq:h(t) upper}. Hence Lemma~\ref{lemma:thm 4.7} yields
\begin{align*}
    & -\log\int_{[0,1]^N} \exp\Big\{-\beta\,n'a_{1:H-r}^2b_{H-r+1}^2 \Big\} \prod_{t=0}^{H-r-1}(a_{t+1})^{\,h(t)-1}\cdot(b_{H-r+1})^{h(H-r)-1} d \bm a\\
    &\qquad \le \lambda \log n' - (m-1)\log\log n' + (H-r+1)\log(H(d_1+d_2-r)) + 2\log 2 + \log\lambda +\beta\\
    & \qquad \le \lambda\log n -(m-1)\log\log n + \lambda\bigg[\log\Big(3\big(1+\tfrac{\omega L}{2}\big)\Big) + (H-r)\log(d_1d_2) + 2\log (H-r)\bigg] \\
    & \qquad \quad + (H-r+1)\log(H(d_1+d_2-r)) + 2\log 2 + \log\lambda +\beta.
\end{align*}
A simple scaling argument shows that the $n$-independent terms in the above expression are bounded from above by
\[
3H(d_1+d_2-r)(H-r+2)\log (d_1d_2)
+ \frac{1}{2} H(d_1+d_2-r)
  \log\Bigl(3\bigl(1+\tfrac{\omega L}{2}\bigr)\Bigr)
+ \beta,
\]
since $\lambda\le\frac12\max_t h(t)\le \frac{1}{2}H(d_1+d_2-r)$.
Collecting all contributions, we obtain
\begin{align*}
    -\log \int_\Theta
    \exp\Big\{-\bigl(1+\tfrac{\omega L}{2}\bigr) \omega n\, R(M,M^\star)\Big\}\,\varphi(d\theta)
    \;\le\;
    \lambda\log n - (m-1)\log\log n + C_1,
\end{align*}
where
\begin{align*}
    C_1
    &= 9(H-r+2)^2(d_1+d_2-r)\log(2rd_1d_2)
    + H \log|\det(P_0)\det(Q_0)|
    \\
    &\quad
    + \frac{H(d_1+d_2-r)}{2}\log\Bigl(3+\tfrac{3\omega L}{2}\Bigr)
    + \omega\,\sigma^2_{\max}(P_0) \sigma^2_{\max}(Q_0).
\end{align*}
Therefore,
\[
C_1
= \mathcal{O}\Bigl((H-r+2)^2(d_1+d_2-r)\log\bigl(2rd_1d_2(1+L)\bigr)\Bigr)
  + C(\omega,P_0,Q_0),
\]
where
\[
C(\omega,P_0,Q_0)
= H \log|\det(P_0)\det(Q_0)|
  + \omega\,\sigma^2_{\max}(P_0) \sigma^2_{\max}(Q_0)
\]
depends only on $(\omega,P_0,Q_0)$.

\end{document}